\def\eqref#1{equation~\ref{#1}}
\def\1{\bm{1}}
\DeclareMathAlphabet{\mathsfit}{\encodingdefault}{\sfdefault}{m}{sl}
\SetMathAlphabet{\mathsfit}{bold}{\encodingdefault}{\sfdefault}{bx}{n}
\def\gA{{\mathcal{A}}}
\def\gE{{\mathcal{E}}}
\def\gL{{\mathcal{L}}}
\def\gN{{\mathcal{N}}}
\def\gQ{{\mathcal{Q}}}
\def\gX{{\mathcal{X}}}
\def\sP{{\mathbb{P}}}
\newcommand{\E}{\mathbb{E}}
\newcommand{\R}{\mathbb{R}}
\newcommand{\KL}{D_{\mathrm{KL}}}
\DeclareMathOperator*{\argmax}{arg\,max}
\newcommand{\e}{\epsilon}
\newcommand{\norm}[1]{\left\lVert#1\right\rVert}
\newtheorem{lemma}{Lemma} 
\newtheorem{proposition}{Proposition}
\newtheorem{assumption}{Assumption}
\icmltitlerunning{Augmented Normalizing Flows}
\begin{document}

\twocolumn[
\icmltitle{Augmented Normalizing Flows: \linebreak
Bridging the Gap Between Generative Flows and Latent Variable Models}

\begin{icmlauthorlist}
\icmlauthor{Chin-Wei Huang}{mil}
\icmlauthor{Laurent Dinh}{goo}
\icmlauthor{Aaron Courville}{mil,cif}
\end{icmlauthorlist}

\icmlaffiliation{mil}{Mila}
\icmlaffiliation{goo}{Google}
\icmlaffiliation{cif}{CIFAR fellow}

\icmlcorrespondingauthor{Chin-Wei Huang}{chin-wei.huang@umontreal.ca}

\icmlkeywords{Normalizing flows, generative flows, variational autoencoders, latent variable models, Hamiltonian ODE, universal approximation, invertible neural networks}

\vskip 0.3in
]

\printAffiliationsAndNotice{}  

\begin{abstract}
In this work, we propose a new family of generative flows
on an augmented data space,
with an aim to improve expressivity 
without drastically increasing the computational cost of sampling and evaluation of a lower bound on the likelihood.
Theoretically, we prove the proposed flow can approximate a Hamiltonian ODE as a universal transport map. 
Empirically, we demonstrate state-of-the-art performance on standard benchmarks of flow-based generative modeling. 
\end{abstract}

\section{Introduction}
\begin{figure*}
    \centering
    \includegraphics[width=1.0\textwidth]{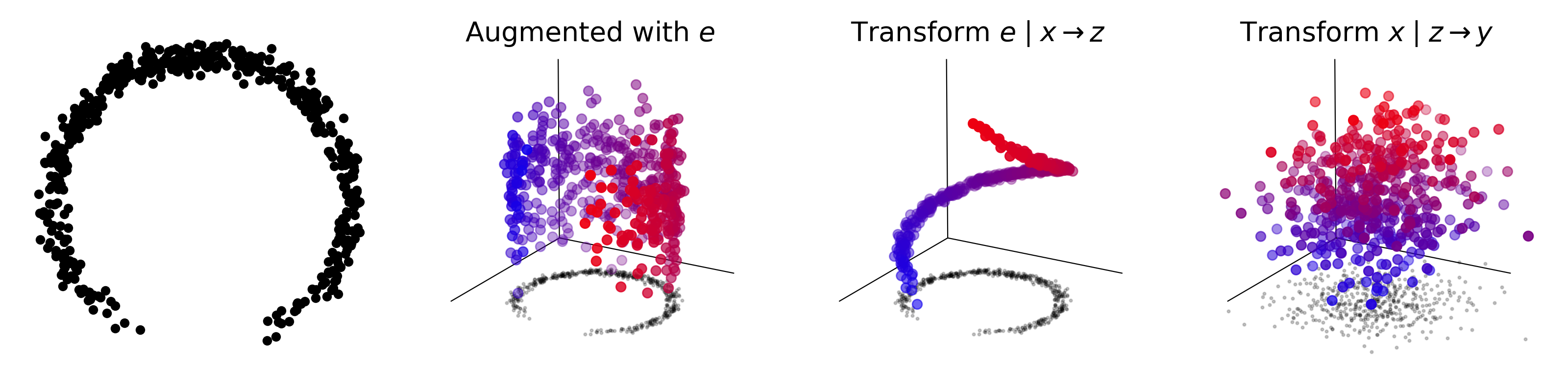}
    \vspace{-0.7cm}
    \caption{\small 
    Transforming data $x$ (\emph{left}) via augmented normalizing flows:
    Black dots and blue dots are marginal and joint data points, respectively.
    \emph{First step}: augment the data $x$ with an independent noise $e$.
    \emph{Second step}: transform the augmented data $e$ conditioned on $x$ into $z$. 
    \emph{Third step}: transform the original data $x$ conditioned on $z$ into $y$, resulting in a Gaussianized joint distribution of $(y,z)$}
    \label{fig:2d_vis}
\end{figure*}
Deep invertible models have recently gained increasing interest among machine learning researchers as they constitute a powerful probabilistic toolkit. 
They allow for the tracking of  changes in probability density and have been widely applied in many tasks, including
\begin{enumerate}[label=(\roman*)]
    \item generative models~\citep{dinh2016density, kingma2018glow, chen2019residualflows},
    \item variational inference~\citep{rezende2015variational,kingma2016improved,berg2018sylvester},
    \item density estimation~\citep{papamakarios2017masked,huang2018neural},
    \item reinforcement learning~\citep{mazoure2019leveraging,ward2019improving}, etc. 
\end{enumerate} 

The main challenges in designing an invertible model for the above use cases are
to ensure (1) the mapping $f$ is invertible, (2) the log-determinant of the Jacobian of $f$ is cheap to compute, and (3) $f$ is expressive. 
For use case (i), ideally we would also like to (4) invert $f$ efficiently. 

In general, it is hard to design a family of functions that satisfy all of the above. 
Most work within this line of research is dedicated to improving the expressivity of the bijective mapping while maintaining the computational tractability of the log-determinant of Jacobian \citep{dinh2016density, kingma2016improved, huang2018neural, chen2019residualflows}. 

Aside from the unfortunate trade-off between the computational budget of inversion/Jacobian log-determinant and the expressivity of the invertible mapping, generative flows suffer from the limitation of local dependency. 
Unlike latent variable models such as Variational Autoencoders (VAEs; \citealt{kingma2013auto, rezende2014stochastic}) and Generative Adversarial Networks (GANs; \citealt{goodfellow2014generative}) which model the high dimensional data as coordinates in another space, most generative flows model the dependency among features only locally. 
Dependencies of features far away from each other can only be propagated through composition of mappings, which progressively enlarges the receptive field. 
Special design of parameterization like the attention mechanism can be made to address this issue \citep{ho2019flow++}.

In this paper, we propose to construct an invertible model on an augmented input space, which when combined with the block-wise coupling of \citet{dinh2016density} satisfies all criteria (1-4). 
The motivation is that to transform some distribution (such as the marginal distribution of $x$ pictured in Figure~\ref{fig:2d_vis}) into another (e.g. standard normal) in the original input space, $f$ needs to be capable of transporting the probability mass ``non-uniformly'' across its domain, 
whereas in an augmented input space it is possible to find a smoother transformation.
For instance, if we couple the data $x$ with an independent noise $e$, we can first transform $e$ conditioned on $x$ into $z$, so that conditioned on different values of $z$, $x$ can be more easily centered and Gaussianized. 
Our proposed method also generalizes multiple variants of VAEs and possesses the advantage of transforming the data in a more globally coherent manner via first embedding the data in the augmented state space.
Finally, operating on an augmented state space allows us to sidestep the topology preserving property of a diffeomorphism, which means input space can potentially be more freely deformed \citep{dupont2019augmented}. 

\paragraph{Our contributions:}
we introduce \emph{Augmented Normalizing Flows} (ANFs), an invertible generative model on the real-valued data $x$ coupled with an independent noise $e$. 
We propose a parameter estimation principle called \emph{Augmented Maximum Likelihood Estimation} (AMLE), which we show amounts to maximizing a lower bound on the marginal likelihood of the original data $x$. 
Theoretically, we show that the family of ANFs with additive coupling can universally transform arbitrary data distribution into a standard Gaussian prior, augmented with a degenerate deterministic variable. 
To the best of our knowledge, this is the first attempt in understanding how expressivity can be improved via composing flow layers  rather than widening the flow \citep{huang2018neural}. 
Experimentally, we apply the proposed method to a suite of standard generative modelling tasks and demonstrate state-of-the-art performance in density estimation and image synthesis.

\section{Background}
\label{sec:background}
Given a training set $(x_i)_{i=1}^n \sim q(x)^n$, where $x_i\in\gX$, and a family of density models $\{p_\pi(x): \pi\in\mathfrak{P}(\gX)\}$, where $\mathfrak{P}(\gX)$ is a collection of sets of parameters that can sufficiently describe the density function, the \emph{Maximum Likelihood Principle} estimates the parameters by maximizing the chance of the data being generated by the assumed model:
\begin{align}
\hat{\pi}:=\argmax_{\pi\in\mathfrak{P}(\gX)}\left\{ \sum_{i=1}^n\log p_\pi(x_i)\right\} 
=\argmax_{\pi\in\mathfrak{P}(\gX)}\, \E_{x}[\log p_\pi(x)] 
\label{eq:mle}
\end{align}
where the latter expectation is over the empirical distribution $\hat{q}(x)$ ($x_i$ with uniformly distributed random index $i\in\{1,\cdots,n\}$). 
$\hat{\pi}$ is known as the maximum likelihood estimate (MLE) for the parameter $\pi$. 
Below, we review two families of likelihood-based density models.

\paragraph{1. Invertible Generative Models}
Assume $y\sim \gN(0,I)$.
Assume the data is generated via a bijective mapping $x=f_\theta(y)$. 
Then the probability density function of $f_\theta(y)$ evaluated at $x$ can be written as
\begin{align}
p_\theta(x)=\gN(f_\theta^{-1}(x); 0,I)\left|\det\frac{\partial f_\theta^{-1}(x)}{\partial x}\right|
\label{eq:cov}
\end{align}
Equivalently, one can parameterize the inverse transformation $x\mapsto g_\theta(x)$ with invertible mapping $g_\theta$, and define the generative transformation as $f_\theta=g_\theta^{-1}$.

Much of the design effort has been dedicated to ensuring (1) the invertibility of the transformation $g$, and (2) efficiency in computing the log-determinant of the Jacobian in Equation~\ref{eq:cov}. 
For example, \citet{dinh2016density} propose the affine coupling:
$$g_\theta(x_a, x_b) = \texttt{concat}(x_a,\, s_\theta(x_a)\odot x_b + m_\theta(x_a))$$
where $s_\theta$ and $m_\theta$ are parameterized by neural networks and $x_a$ and $x_b$ are two partitioning of the data vector, 
and compose multiple layers of transformations intertwined with permutation of elements of $x$.

Invertible models allow for exact computation of the likelihood, and can be composed to increase modelling capacity. 
Nevertheless, the expressivity of the transformation is limited due to the need to satisfy invertibility and to reduce the cost of computing the Jacobian determinant. 
For a comprehensive review of this topic, see \citet{kobyzev2019normalizing} and \citet{papamakarios2019normalizing}.

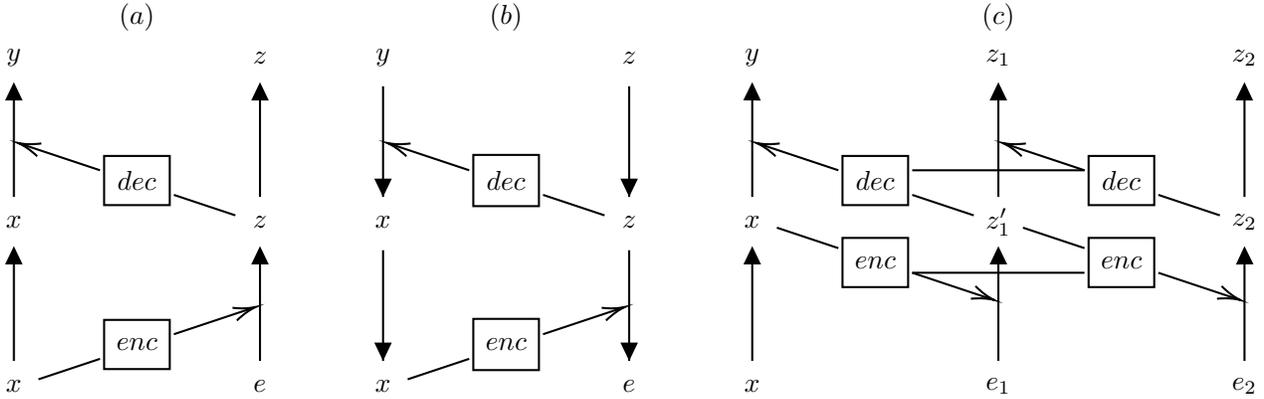
\begin{figure*}
    \centering
    \resizebox{1.0\textwidth}{!}{%
    \tikzset{every picture/.style={line width=0.75pt}} 

\begin{tikzpicture}[x=0.75pt,y=0.75pt,yscale=-1,xscale=1]


\draw (70,0) node   {$(a)$};

\draw (10,180) node   {$x$};
\draw (10,100) node   {$x$};
\draw (10,20) node   {$y$};
\draw (130,180) node   {$e$};
\draw (130,100) node   {$z$};
\draw (130,20) node   {$z$};

\draw (70,160) node   {$enc$};
\draw (70,80) node   {$dec$};
\draw    (54,148) -- (86,148) -- (86,172) -- (54,172) -- cycle  ;
\draw    (54,68) -- (86,68) -- (86,92) -- (54,92) -- cycle  ;

\draw    (10,168) -- (10,114) ;
\draw [shift={(10,112)}, rotate = 90] [fill={rgb, 255:red, 0; green, 0; blue, 0 }  ][line width=0.75]  [draw opacity=0] (8.93,-4.29) -- (0,0) -- (8.93,4.29) -- cycle    ;
\draw    (130,168) -- (130,114) ;
\draw [shift={(130,112)}, rotate = 90] [fill={rgb, 255:red, 0; green, 0; blue, 0 }  ][line width=0.75]  [draw opacity=0] (8.93,-4.29) -- (0,0) -- (8.93,4.29) -- cycle    ;
\draw    (10,88) -- (10,34) ;
\draw [shift={(10,32)}, rotate = 450] [fill={rgb, 255:red, 0; green, 0; blue, 0 }  ][line width=0.75]  [draw opacity=0] (8.93,-4.29) -- (0,0) -- (8.93,4.29) -- cycle    ;
\draw    (130,88) -- (130,34) ;
\draw [shift={(130,32)}, rotate = 450] [fill={rgb, 255:red, 0; green, 0; blue, 0 }  ][line width=0.75]  [draw opacity=0] (8.93,-4.29) -- (0,0) -- (8.93,4.29) -- cycle    ;


\draw    (22,177) -- (52,167) ;
\draw    (88,155) -- (127,142) ;
\draw [shift={(128,141.67)}, rotate = 521.1600000000001] [color={rgb, 255:red, 0; green, 0; blue, 0 }  ][line width=0.75]    (10.93,-3.29) .. controls (6.95,-1.4) and (3.31,-0.3) .. (0,0) .. controls (3.31,0.3) and (6.95,1.4) .. (10.93,3.29)   ;

\draw    (118,97) -- (88,87) ;
\draw    (52,75) -- (13,62) ;
\draw [shift={(12,61.67)}, rotate = 377.91999999999996] [color={rgb, 255:red, 0; green, 0; blue, 0 }  ][line width=0.75]    (10.93,-3.29) .. controls (6.95,-1.4) and (3.31,-0.3) .. (0,0) .. controls (3.31,0.3) and (6.95,1.4) .. (10.93,3.29)   ;

\draw (250,0) node   {$(b)$};

\draw (190,180) node   {$x$};
\draw (190,100) node   {$x$};
\draw (190,20) node   {$y$};
\draw (310,180) node   {$e$};
\draw (310,100) node  {$z$};
\draw (310,20) node   {$z$};

\draw (250,160) node   {$enc$};
\draw (250,80) node   {$dec$};

\draw    (234,147.5) -- (266,147.5) -- (266,172.5) -- (234,172.5) -- cycle  ;
\draw    (234,67.5) -- (266,67.5) -- (266,92.5) -- (234,92.5) -- cycle  ;

\draw    (190,114) -- (190,168) ;
\draw [shift={(190,168)}, rotate = 270] [fill={rgb, 255:red, 0; green, 0; blue, 0 }  ][line width=0.75]  [draw opacity=0] (8.93,-4.29) -- (0,0) -- (8.93,4.29) -- cycle    ;
\draw    (310,114) -- (310,168) ;
\draw [shift={(310,168)}, rotate = 270] [fill={rgb, 255:red, 0; green, 0; blue, 0 }  ][line width=0.75]  [draw opacity=0] (8.93,-4.29) -- (0,0) -- (8.93,4.29) -- cycle    ;
\draw    (310,88) -- (310,34) ;
\draw [shift={(310,88)}, rotate = 270] [fill={rgb, 255:red, 0; green, 0; blue, 0 }  ][line width=0.75]  [draw opacity=0] (8.93,-4.29) -- (0,0) -- (8.93,4.29) -- cycle    ;
\draw    (190,88) -- (190,34) ;
\draw [shift={(190,88)}, rotate = 270] [fill={rgb, 255:red, 0; green, 0; blue, 0 }  ][line width=0.75]  [draw opacity=0] (8.93,-4.29) -- (0,0) -- (8.93,4.29) -- cycle    ;


\draw    (202,177) -- (232,167) ;
\draw    (268,155) -- (307,142) ;
\draw [shift={(308,141.67)}, rotate = 521.1600000000001] [color={rgb, 255:red, 0; green, 0; blue, 0 }  ][line width=0.75]    (10.93,-3.29) .. controls (6.95,-1.4) and (3.31,-0.3) .. (0,0) .. controls (3.31,0.3) and (6.95,1.4) .. (10.93,3.29)   ;

\draw    (298,97) -- (268,87) ;
\draw    (232,75) -- (193,62) ;
\draw [shift={(192,61.67)}, rotate = 377.91999999999996] [color={rgb, 255:red, 0; green, 0; blue, 0 }  ][line width=0.75]    (10.93,-3.29) .. controls (6.95,-1.4) and (3.31,-0.3) .. (0,0) .. controls (3.31,0.3) and (6.95,1.4) .. (10.93,3.29)   ;


\draw (490,0) node   {$(c)$};

\draw (370,180) node   {$x$};
\draw (370,100) node   {$x$};
\draw (370,20) node   {$y$};

\draw (490,180) node   {$e_{1}$};
\draw (490,100) node  {$z_{1}'$};
\draw (490,20) node   {$z_{1}$};

\draw (610,180) node   {$e_{2}$};
\draw (610,100) node   {$z_{2}$};
\draw (610,20) node   {$z_{2}$};

\draw (430,120) node   {$enc$};
\draw (550,120) node   {$enc$};
\draw (430,80) node   {$dec$};
\draw (550,80) node   {$dec$};
\draw    (414,108) -- (446,108) -- (446,132) -- (414,132) -- cycle  ;
\draw    (534,108) -- (566,108) -- (566,132) -- (534,132) -- cycle  ;
\draw    (414,68) -- (446,68) -- (446,92) -- (414,92) -- cycle  ;
\draw    (534,68) -- (566,68) -- (566,92) -- (534,92) -- cycle  ;

\draw    (370,168) -- (370,114) ;
\draw [shift={(370,112)}, rotate = 90] [fill={rgb, 255:red, 0; green, 0; blue, 0 }  ][line width=0.75]  [draw opacity=0] (8.93,-4.29) -- (0,0) -- (8.93,4.29) -- cycle    ;
\draw    (490,168) -- (490,114) ;
\draw [shift={(490,112)}, rotate = 89.75] [fill={rgb, 255:red, 0; green, 0; blue, 0 }  ][line width=0.75]  [draw opacity=0] (8.93,-4.29) -- (0,0) -- (8.93,4.29) -- cycle    ;
\draw    (610,168) -- (610,114) ;
\draw [shift={(610,112)}, rotate = 89.24] [fill={rgb, 255:red, 0; green, 0; blue, 0 }  ][line width=0.75]  [draw opacity=0] (8.93,-4.29) -- (0,0) -- (8.93,4.29) -- cycle    ;

\draw    (370,88) -- (370,34) ;
\draw [shift={(370,32)}, rotate = 450.66] [fill={rgb, 255:red, 0; green, 0; blue, 0 }  ][line width=0.75]  [draw opacity=0] (8.93,-4.29) -- (0,0) -- (8.93,4.29) -- cycle    ;
\draw    (490,88) -- (490,34) ;
\draw [shift={(490,32)}, rotate = 449.95] [fill={rgb, 255:red, 0; green, 0; blue, 0 }  ][line width=0.75]  [draw opacity=0] (8.93,-4.29) -- (0,0) -- (8.93,4.29) -- cycle    ;
\draw    (610,88) -- (610,34) ;
\draw [shift={(610,32)}, rotate = 450.49] [fill={rgb, 255:red, 0; green, 0; blue, 0 }  ][line width=0.75]  [draw opacity=0] (8.93,-4.29) -- (0,0) -- (8.93,4.29) -- cycle    ;


\draw    (382,103) -- (412,113) ;
\draw    (448,125) -- (487,138) ;
\draw [shift={(488,138.33)}, rotate = 200] [color={rgb, 255:red, 0; green, 0; blue, 0 }  ][line width=0.75]    (10.93,-3.29) .. controls (6.95,-1.4) and (3.31,-0.3) .. (0,0) .. controls (3.31,0.3) and (6.95,1.4) .. (10.93,3.29)   ;

\draw    (502,103) -- (532,113) ;
\draw    (568,125) -- (607,138) ;
\draw [shift={(608,138.33)}, rotate = 200] [color={rgb, 255:red, 0; green, 0; blue, 0 }  ][line width=0.75]    (10.93,-3.29) .. controls (6.95,-1.4) and (3.31,-0.3) .. (0,0) .. controls (3.31,0.3) and (6.95,1.4) .. (10.93,3.29)   ;

\draw    (598,97) -- (568,87) ;
\draw    (532,75) -- (493,62) ;
\draw [shift={(492,61.67)}, rotate = 377.91999999999996] [color={rgb, 255:red, 0; green, 0; blue, 0 }  ][line width=0.75]    (10.93,-3.29) .. controls (6.95,-1.4) and (3.31,-0.3) .. (0,0) .. controls (3.31,0.3) and (6.95,1.4) .. (10.93,3.29)   ;

\draw    (478,97) -- (448,87) ;
\draw    (412,75) -- (373,62) ;
\draw [shift={(372,61.67)}, rotate = 377.91999999999996] [color={rgb, 255:red, 0; green, 0; blue, 0 }  ][line width=0.75]    (10.93,-3.29) .. controls (6.95,-1.4) and (3.31,-0.3) .. (0,0) .. controls (3.31,0.3) and (6.95,1.4) .. (10.93,3.29)   ;

\draw    (448,125) -- (532,125) ;
\draw    (448,75) -- (532,75) ;

\end{tikzpicture}
    }
    \vspace{-0.5cm}
    \caption{\small 
    (\emph{a}) Augmented normalizing flow with block coupling and (\emph{b}) the reverse path for generation. 
    (\emph{c}) Hierarchical augmented normalizing flow. 
    The horizontal connections indicate deterministic features that will be concatenated with the stochastic features in the next transform block.}
    \label{fig:anfs}
\end{figure*}

\paragraph{2. Variational Autoencoders}
Assume the data follows the generating process: $x\sim p_\theta(x|z)$ where $z\sim p_\theta(z)$. 
For simplicity, we assume $p_\theta(z)$ is the standard Gaussian distribution and drop the dependency on $\theta$ henceforward. 
Our goal is to find the MLE for $\theta$, but the log marginal density $\log p_\theta(x) = \log \int_z p_\theta(x|z)p(z) dz$ is generally not tractable since it involves integration. 
Instead, one can maximize a surrogate objective known as the \emph{evidence lower bound} (ELBO):
\begin{align}
\gL(\theta,\phi; x) = \E_{q_\phi(z|x)}[\log p_\theta(x|z)] - \KL(q_\phi(z|x)||p(z))
\end{align}
where $q_\phi(z|x)$ is an inference network that amortizes the cost of parameterizing the variational distribution per input instance $x$ via conditioning. 
Learning and inference can be jointly achieved by drawing a stochastic estimate of the gradient of the ELBO via reparameterization (i.e. change of variable):
\begin{align}\gL(\theta,\phi; x)&=\E_{e\sim q(e)}[\log p_\theta(x|g_\phi(x,e)) \, + \label{eq:elbo_rep}\\
&\log \gN(g_\phi(x,e); 0,I) - \log q_\phi(g_\phi(x,e)|x)] \nonumber\end{align}
if $g_\phi(x,e)$ with $e\sim q(e)$ follows the same density as $q_\phi(z|x)$. Conventionally, $q_\phi(z|x)$ is a multivariate Gaussian distribution with diagonal covariance. 
We write it as $\gN(z; \mu_\phi(x), \sigma^2_\phi(x))$. 
One choice of reparameterization is $g_\phi(x,e)=\mu_\phi(x)+\sigma_\phi(x)\odot e$ with $q(e)=\gN(0,I)$.

VAEs allow one to embed the data in another space (usually of lower dimensionality), and to generate via an arbitrarily parameterized mapping. 
However, the log likelihood of the data is no longer tractable, so we can only maximize an approximate log likelihood.
The performance of the model highly depends on the choice of the encoding distribution and the decoding distribution, as they are closely related to the tightness of the lower bound \citep{cremer2018inference}.

\section{Augmented Maximum Likelihood}
\label{sec:amle}
For augmented maximum likelihood, we couple each data point with an independent random variable $e\in\gE$ drawn from $q(e)$ (in all our experiments we set $q(e)=\gN(0,I)$), and consider a family of joint density models $\{p_\pi(x,e): \pi\in\mathfrak{P}(\gX\times\gE)\}$. 
Instead of maximizing the marginal likelihood of $x_i$'s, we maximize the joint likelihood:
\begin{align}
\hat{\pi}_\gA:=
\argmax_{\pi\in\mathfrak{P}(\gX\times\gE)}\, \E_{x,e}[\log p_\pi(x,e)]
\label{eq:amle}
\end{align}
where the expectation is over $(x,e)\sim \hat{q}(x)q(e)$.
We refer to this extremum estimator as the \emph{Augmented Maximum Likelihood Estimator} (AMLE).
The benefit of maximizing the joint likelihood is that it allows us to make use of the augmented state space to induce structure on the marginal distribution of $x$ in
the original input space.

\paragraph{Lower bounding the log marginal likelihood} 
Since the entropy of $e$ is constant wrt the model parameter $\pi$, $\hat{\pi}_\gA$ is equal to the maximizer of $\gL_\gA(\pi; x):=\E_{e}[\log p_\pi(x,e)] + H(e)$ averaged over all $x_i$'s.
For any $x\in\gX$, the quantity $\log p_\pi(x) - \gL_\gA(\pi; x) $ can be written as the KL divergence:
\begin{align*}
\log\,&p_\pi(x) - \gL_\gA(\pi; x) \\ 
&= \cancel{\log p_\pi(x)} - \E_{e}[\cancel{\log p_\pi(x)} + \log p_\pi(e|x)] - H(e) \\
&= \KL(q(e)||p_\pi(e|x))
\end{align*}

Since KL is non-negative, maximizing the joint likelihood according to Equation~\ref{eq:amle} is equivalent to maximizing a lower bound on the log marginal likelihood of $x$.
We refer to this as the \emph{Augmentation Gap}, as it reflects the incapability of the joint density to model the marginal of $e$ independently of $x$.

\begin{figure*}[th]
    \centering
    \includegraphics[height=6.0cm]{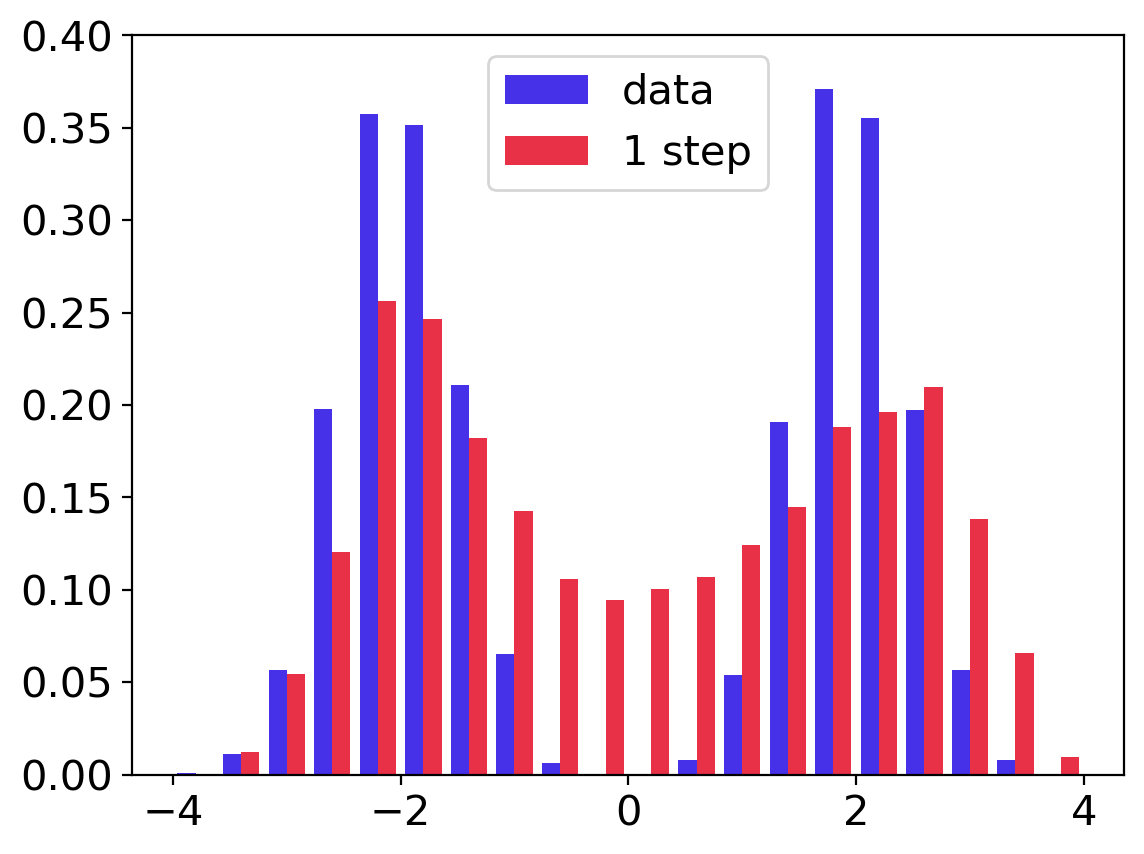}
    \hspace{0.6cm}
    \includegraphics[height=6.0cm]{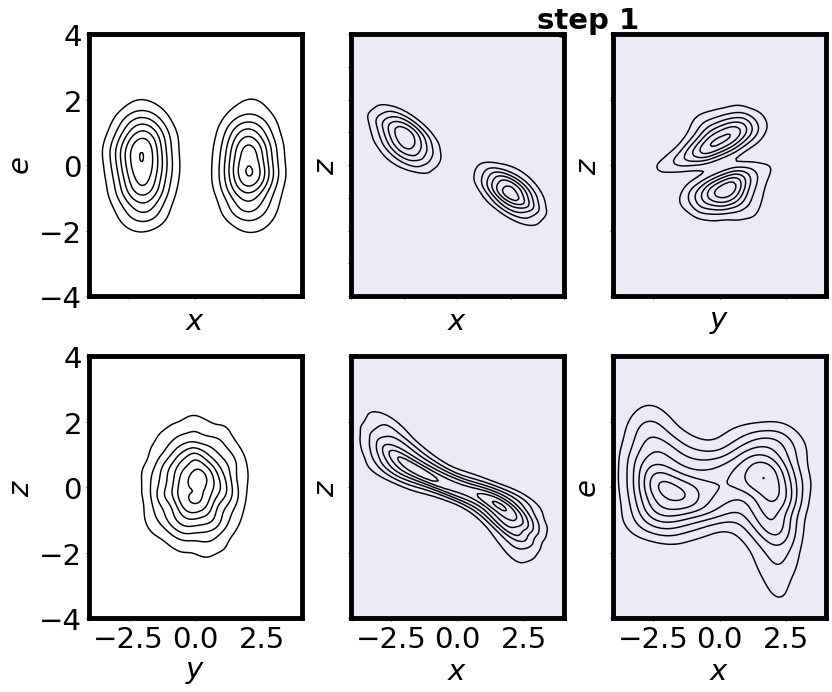}
    \vspace{-2mm}
    \caption{\small 
    Density modeling of 1D MoG with VAE (aka 1-step ANF). \emph{Left}: marginal distribution in the $\gX$-space. \emph{Right}: joint distribution in the $\gX\times\gE$-space. 
    The first row is the inference path, where the joint data density $q(x)q(e)$ is mapped by an encoding transform (transforming $e$ into $z$ conditioned on $x$) followed by a decoding transform (transforming $x$ into $y$ conditioned on $z$). 
    The second row is the generation path, where the joint prior density $p(y)p(z)$ is transformed by the inverse decoding (transforming $y$ into $x$) followed by the inverse encoding (transforming $z$ into $e$).}
    \label{fig:anf_1d_1}
\end{figure*}

\paragraph{Estimating the log marginal likelihood}
The log marginal likelihood $\log p_\pi(x)$ of the data can be estimated in a way similar to \citet{burda2015importance}, by drawing $K$ i.i.d. samples of $e_j\sim q(e)$ per $x$ to estimate the following stochastic lower bound:
$$\hat{\gL}_{\gA,K}(\pi)  
:= \log  \frac{1}{K}\sum_{j=1}^K \frac{p_\pi(x, e_j)}{q(e_j)}$$
which can be shown to be a consistent estimator for $\log p_\pi(x)$ and is monotonically tighter in expectation as we increase $K$.

\section{Augmented Normalizing Flows (ANF)}
\label{sec:anf}

We now demonstrate how to leverage the augmented input space to model the complex marginal distribution of the data. 
We consider maximizing the joint likelihood of $x$ coupled with a random noise $e\sim q(e)$.
Let $(y, z) \sim p(y,z)$ be drawn from some simple distribution, such as independent Gaussian. 
Assume the data $x, e$ is deterministically generated via an invertible mapping $x,e = F_\pi(y,z)$, with inverse $G_\pi=F_\pi^{-1}$. 
Then analogous to Equation~\ref{eq:cov}, $x, e$ has a joint density 
\begin{align*}
p_\pi(x, e)=\gN(G_\pi(x, e); 0,I)\left|\det\frac{\partial G_\pi(x, e)}{\partial (x, e)}\right|
\end{align*}

For simplicity, we can choose $q(e)$ to be the standard normal distribution. 
What we are left with is the choice of an invertible $G_\pi$ that can harness the augmented state space $\gE$ to induce a complex marginal on $\gX$. 
Inspired by the affine coupling proposed by \citet{dinh2016density}, we conditionally transform $x$ and $e$, hoping the structure in the marginal of $x$ can ``leak'' into $\gE$ and make the joint more easily Gaussianized. 
Concretely, we define two types of affine coupling
\begin{align*}
g_\pi^{\text{enc}}(x, e) &= \texttt{concat}(x,\, s_\pi^\text{enc}(x)\odot e + m_\pi^\text{enc}(x)), \\
g_\pi^{\text{dec}}(x, e) &= \texttt{concat}(s_\pi^\text{dec}(e)\odot x + m_\pi^\text{dec}(e),\, e)
\end{align*}
We refer to the pair of encoding transform and decoding transform as the \emph{autoencoding} transform. 
We stack them up in alternating order, i.e. 
$G_\pi = g_{\pi_N}^{\text{dec}}\circ g_{\pi_N}^{\text{enc}} \circ ... \circ g_{\pi_1}^{\text{dec}}\circ g_{\pi_1}^{\text{enc}}$ for $N\geq1$ steps, where $\pi=\{\pi_1,...,\pi_N\}$ is the set of all parameters.
See Figure \ref{fig:anfs}-(a,b) for an illustration.

\begin{figure*}
    \centering
    \includegraphics[width=1.00\textwidth,clip=true, trim=0 10px 0 5px]{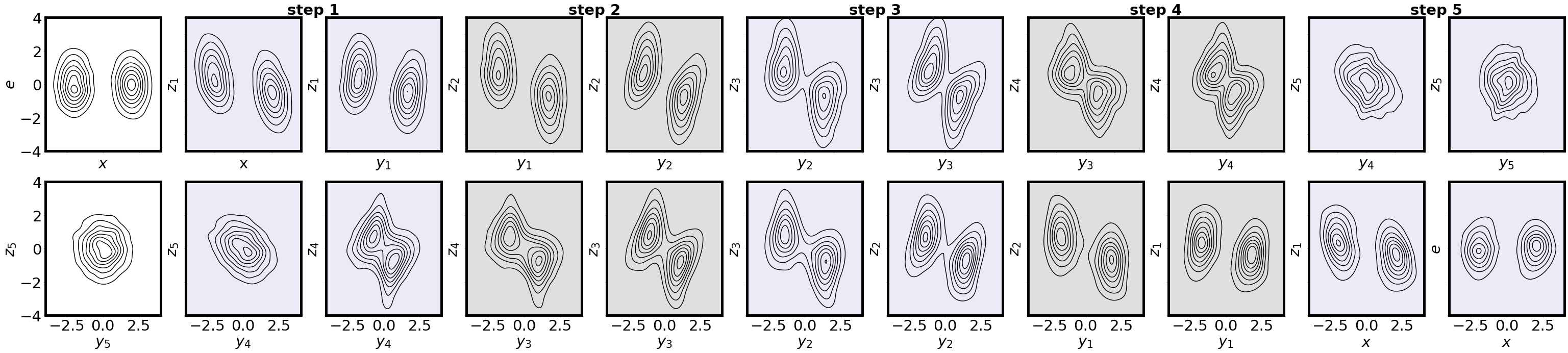}
    \caption{\small 5-step ANF on 1D MoG. In the inference path (top row), we start with an encoding transform that maps $e$ to $z_1$ conditioned on $x$, followed by a decoding transform that maps $x$ into $y_1$ conditioned on $z_1$. 
    We reuse the same encoder and decoder to refine the joint variable repeatedly to obtain $y_5$ and $z_5$. 
    In the generative path (bottom row), we reverse the process, starting with the inverse transform of the decoding, followed by the inverse transform of the encoding, etc. 
    }
    \label{fig:anf_1d_5}
\end{figure*}

\subsection{VAE as ANF}
Variational Autoencoders are a special case of augmented normalizing flows with only ``one step'' of encoding and decoding transform \citep{dinh2014nice}. 
To see this, assume the decoding distribution $p_\theta(x|z)$ is a factorized Gaussian with mean $\mu_\theta(z)$ and standard deviation $\sigma_\theta(z)$.
By letting $z=\mu_\phi(x)+\sigma_\phi(x)\cdot e$ and $y=(x-\mu_\theta(z))/\sigma_\theta(z)$ and applying the change of variable formula to both $q_\phi(z|x)$ and $p_\theta(x|z)$, we get from Equation (\ref{eq:elbo_rep}) 
\begingroup\makeatletter\def\f@size{8.501}\check@mathfonts
\def\maketag@@@#1{\hbox{\m@th\large\normalfont#1}}
\begin{flalign}
\gL(\theta,\phi; x) = 
\E_{e\sim q(e)}\Big[&\log \gN(y; 0,I) - \sum_i\log \sigma_{\theta,i}(z) \,+ \label{eq:elbo_cov} \\
&\log \gN(z; 0,I) + \sum_j\log \sigma_{\phi,j}(x)\Big] + H(e) \nonumber
\end{flalign}
\endgroup

Averaging over the data distribution $\hat{q}(x)$, we obtain the expected joint likelihood (up to the constant $H(e)$)
$$\E_{x,e\sim \hat{q}(x)q(e)}\left[\log\gN((y,z); 0,I)\left|\det\frac{\partial (y,z)}{\partial (x,e)}\right|\right]$$

The variational gap between the log marginal likelihood and the evidence lower bound is equal to the augmentation gap since the KL divergence is invariant under the transformation between $e\longleftrightarrow z$:
$$\KL(q(z|x)||p(z|x)) = \KL(q(e)||p(e|x))$$

\begin{figure}
    \centering
    \includegraphics[width=0.48\textwidth,clip=true, trim=100px 25px 0px 5px]{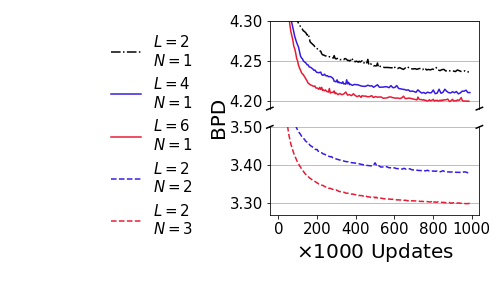}
    \vspace{-7mm}
    \caption{\small Comparing increasing number of layers of stochastic units ($L$) versus increasing number of layers of autoencoding transforms ($S$). (\emph{x-axis}): number of updates. (\emph{y-axis}): upper bound of bits per dim (BPD) on CIFAR 10 test data.
    }
    \label{fig:layers}
\end{figure}

This gives us an alternative interpretation of \emph{inference suboptimality}~\citep{cremer2018inference}: the inaccuracy of inferring the true posterior $p(z|x)$ can be attributed to the incapability of the joint density to model the augmented data $q(e)$. 

To illustrate this phenomenon, we model the density of a one dimensional mixture of Gaussian (1D MoG).
In Figure \ref{fig:anf_1d_1} (left), we plotted the density histograms of the MoG distribution (blue) and a one-step ANF, i.e. VAE with Gaussian encoder and decoder (orange), trained on the MoG samples. 
Not surprisingly, the latter fails to represent two well separated modes of probability mass. 
In Figure \ref{fig:anf_1d_1} (right), we visualize the joint density of the augmented data $x,e\sim q(x)q(e)$ throughout the transformation. 
We see that the transformed data $y,z=g^{dec}_{\pi_1}(g^{enc}_{\pi_1}(x,e))$ is not perfectly Gaussianized.
In fact, if we project it horizontally we can see that the ``aggregated posterior'' (marginal of $z$) does not match the prior distribution $p(z)$. 
As a result, the pushforward $x,e=g^{enc, -1}_{\pi_1}(g^{dec, -1}_{\pi_1}(y,z))$ of $y,z\sim p(y,z)$ does not follow the augmented data distribution $q(x)q(e)$ well. 
When we fix different values of $x$, we have different slices of density functions for $e$, indicating that $e$ and $x$ are dependent and that $p_\pi(e|x)$ deviates from $q(e)$. 

We carry out the same experiment on 1D MoG with multiple flow layers, which generalizes a VAE with Gaussian encoder and decoder. 
We set the number of flow layers (i.e. \emph{steps}) to be $5$.
To furthermore demonstrate the benefit of transformation composition, we also tie the parameters of each encoder and decoder step, separately.  
That is, the same set of parameters are used at different steps of encoding and decoding to make sure capacity stays constant. 
Since the conditional independence assumption in VAE is relaxed,
the augmented data is more successfully Gaussianized, as can be seen in Figure~\ref{fig:anf_1d_5}. The generated samples also follow the target joint density more closely.

\begin{table*}[ht]
\centering
\small
\begin{tabular}{lccccccccc} 
\toprule
{\bf Model} & {\bf MNIST} & {\bf CIFAR 10} & {\bf ImageNet 32} & {\bf ImageNet 64} & {\bf CelebA-HQ} \\ 
\midrule 
\textbf{\small Models with autoregressive components} \\
{VAE + IAF {\small \citep{kingma2016improved}}} & {--} & 3.11 & {--} & {--} & {--}\\
{PixelCNN {\small \citep{oord2016pixel}}} & {--} & 3.14 & {--} & {--} & {--}\\
{PixelCNN (multiscale) {\small \citep{reed2017parallel}}} & {--} & {--} & 3.95 & 3.70 & {--}\\
{PixelSNAIL {\small \citep{chen2017pixelsnail}}} & {--} & \textbf{2.85} & \textbf{3.80} & {--} & {--} \\
{SPN {\small \citep{menick2018generating}}} & {--} & {--} & \textbf{3.79} & \textbf{3.52} & \textbf{0.61}\\
\midrule
\textbf{\small Flow-based models} \\
{Real NVP {\small \citep{dinh2016density}}}  & 1.06 & 3.49 & 4.28 & 3.98 & {--} \\
{Glow {\small \citep{kingma2018glow}}}  &  1.05 & 3.35 & 4.09 & 3.81 & 1.03 \\
{FFJORD {\small \citep{grathwohl2018ffjord}}}  & 0.99 & 3.40 & {--} & {--} & {--} \\
{Residual {\small \citep{chen2019residualflows}}}  & 0.97 & 3.28 & 4.01 & 3.76 & 0.99 \\
{Flow++ {\small \citep{ho2019flow++}}}  & {--} & 3.09 & \textbf{3.86} & 3.69 & {--} \\
{MaCow {\small \citep{ma2019macow}}} & {--} & 3.16 & {--} & 3.69 & \textbf{0.67} \\
\midrule
{ANF (ours)} & \textbf{0.93} & \textbf{3.05} & 3.92 & \textbf{3.66} & 0.72\\
\bottomrule
\end{tabular}
\caption{\small Bits-per-dim estimates of standard benchmarks (the lower the better).
Results of Flow++, MaCow, and ANF are models that employ variational dequantization instead of uniform noise injection. Details can be found in the appendix. 
}
\label{tab:density}
\end{table*}

\subsection{Hierarchical Augmented Normalizing Flows}
The information flow of the encoding-decoding transform just described is limited to the size of the random vector $e$, which makes it hard to optimize for more realistic settings such as natural images. 
We thus propose a second architecture by following the hierarchical variational autoencoder, which is defined by two pairs of joint distributions~\footnote{This particular factorization of the variational distribution is known as the bottom-up inference. We leave the top-down inference~\citep{kingma2016improved} and the bidirectional inference~\citep{maaloe2019biva}, which benefit more from parameter sharing, for future work.} 
\begin{align*}
p(x,z_1,...,z_L)&=p(x|z_1,...,z_L)\prod_{l=1}^L p(z_l|z_{l+1},...,z_L) \\
q(z_1,...,z_L|x)&=\prod_{l=1}^L q(z_l|z_{1},...,z_{l-1}, x).
\end{align*}
When all the conditionals are Gaussian distributions, the corresponding ELBO can be similarly rearranged to be the loss function of an ANF (see Figure~\ref{fig:anfs}-(c)). 
The encoding transform for each $e_l$ is conditioned on the ``transformed'' preceding variables  $s_{\pi, l}^e(x,z_{<l})\odot e_l + m_{\pi, l}^e(x,z_{<l})$ due to the conditioning in $q(z_l|z_{<l},x)$.
The decoding transform on the other hand is conditioned on the ``original'' preceding variables $s_{\pi, l}^d(e_{>l})\odot e_l + m_{\pi, l}^d(e_{>l})$, which is block-wise inverse autoregressive~\citep{kingma2016improved}. 
When the conditioning mappings are convolutional, the lower level transformation preserves information of the input locally, which is then combined with the deterministic path of the decoding that ``sees'' more of the input. More details on the architecture are described in Appendix \ref{app:raeb}.

\section{ANFs as Approximate Hamiltonian ODE and Universality}
The affine-coupling autoencoding transform with augmented variable is reminiscent of the leap-frog integration of the Hamiltonian system \citep{neal2011mcmc}. 
More recently, it has been shown by \citet{taghvaei19a} that solving a family of Hamiltonian ordinary differential equations (ODE) with an infinite time horizon gives us a transport map from the initial (data) distribution to an arbitrary target distribution with a log-concave density function. 
This suggests we can develop an approximation theorem by using ANFs to approximately, numerically solve the ODE.

Formally, we define scaling coefficients $\alpha_t = \log \frac{2}{t}$ and $\beta_t = \gamma_t = \log t^2$.
Let $p(x)$ be the standard normal density, and $q(x)$ be the data distribution.
Let $q_0=q$ and $\Phi:\gX\rightarrow\R$ be some convex function. 
Define the Hamiltonian ODE:
\begin{align}
\dot{x}_t &= e^{\alpha_t-\gamma_t}e_t, && x_0\sim q_0 \nonumber 
\\
\dot{e}_t &= -e^{\alpha_t+\beta_t+\gamma_t}\nabla\log \frac{q_t(x_t)}{p(x_t)}, &&e_0=\nabla\Phi(x_0) \nonumber 
\end{align}
where $\dot{x}_t$ and $\dot{e}_t$ are the time derivatives of $x$ and $e$ at time $t$, and $q_t$ is the marginal density of $x_t$. 

Second, we construct a sequence of encoding and decoding functions $m^{\text{enc}}_{n}$ and $m^{\text{dec}}_{n}$ parameterized by neural networks, and define the following (additive) invertible mappings
\begin{align}
e^\pi_{1} &= e^\pi_0 + m^{\text{enc}}_{1}(x^\pi_0)\nonumber\\
x^\pi_{n+1} &= x^\pi_{n} + 2\e \cdot m^{\text{dec}}_{{n+1}}(e^\pi_{n+1})  && \forall\,n\geq0 \label{eq:x_nn_main}
\\
e^\pi_{n+1} &= e^\pi_n + 2\e \cdot m^{\text{enc}}_{{n+1}}(x^\pi_n) && \forall\,n\geq1 \label{eq:e_nn_main}
\end{align}
with $e^\pi_0=0$ and $x^\pi_0 \sim q_0$. 
The step size parameter $\e$ will be chosen to depend on the depth coefficient $N$, i.e. the number of steps of the joint transformation. 

Assume our target distribution lies within a family of distributions $\gQ$ satisfying Assumption \ref{assumption} in the Appendix \ref{app:proofs} (some smoothness condition on the time derivatives and $\Phi$). 
We can then set the encoding and decoding functions to be arbitrarily close to the time derivatives by the universal approximation of neural networks~\citep{cybenko1989approximation}, and by taking the depth $N$ to be arbitrarily large, we can approximate the transport map induced by the Hamiltonian ODE arbitrarily well, which gives rise to the following universal approximation theorem (the proof is relegated to the Appendix \ref{app:proofs}):
\begin{restatable}{thm}{anfdist}
\label{thm:anf_dist}
For any $q\in\gQ$, we can find a sequence $(x^\pi_N,e^\pi_N)$ of ANFs of the additive form (\ref{eq:x_nn_main},\ref{eq:e_nn_main}), such that if $x^\pi_0,e^\pi_0\sim q(x)\delta_0(e)$ and $x_\infty, e_\infty\sim p(x)\delta_0(e)$, then $(x^\pi_N,e^\pi_N)\rightarrow (x_\infty, e_\infty)$ in distribution. 
\end{restatable}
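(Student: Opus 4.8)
The plan is to recognise the additive updates (\ref{eq:x_nn_main})--(\ref{eq:e_nn_main}) as a leapfrog (St\"ormer--Verlet) discretisation of the Hamiltonian ODE and to bound the gap between the depth-$N$ ANF and the target $p\,\delta_0$ by a triangle inequality with three pieces: the flow-to-target error at a finite horizon, the time-discretisation error, and the neural-network approximation error. For the first piece, invoke \citet{taghvaei19a}: under the hypotheses collected in Assumption~\ref{assumption} (log-concavity of the limiting density, smoothness of $\Phi$ and of the time derivatives and marginals $q_t$), the solution $(x_t,e_t)$ of the Hamiltonian ODE started at $x_0\sim q$, $e_0=\nabla\Phi(x_0)$ satisfies $\mathrm{law}(x_t)\Rightarrow p$ and $e_t\to 0$ as $t\to\infty$. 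Metrise weak convergence by the bounded-Lipschitz distance $d_{\mathrm{BL}}$ and, for each $N$, choose a horizon $T_N\to\infty$ (slowly) with $d_{\mathrm{BL}}\big(\mathrm{law}(x_{T_N},e_{T_N}),\,p\,\delta_0\big)\to 0$.

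For the discretisation piece, put $N$ mesh points on $[0,T_N]$ — the coordinate singularity of $\alpha_t$ at $t=0$ being harmless since it is absorbed by the exact initialisation $e_0=\nabla\Phi(x_0)$, as in the Nesterov-ODE analysis — set $\e=\e(N)=T_N/(2N)$, and match the network maps to the true, separable vector fields: $m^{\mathrm{enc}}_1\approx\nabla\Phi$ (the loading step, which is why it carries no factor $2\e$), $m^{\mathrm{dec}}_{n+1}(\cdot)\approx e^{\alpha_{t_{n+1}}-\gamma_{t_{n+1}}}(\cdot)$ (a linear map), and $m^{\mathrm{enc}}_{n+1}(\cdot)\approx -e^{\alpha_{t_n}+\beta_{t_n}+\gamma_{t_n}}\nabla\log\frac{q_{t_n}}{p}(\cdot)$. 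With the exact fields in place, (\ref{eq:x_nn_main})--(\ref{eq:e_nn_main}) is a symmetric one-step integrator of consistency order at least one, and by the derivative bounds in Assumption~\ref{assumption} together with a Gr\"onwall estimate, the piecewise-affine interpolant of $(x^\pi_n,e^\pi_n)$ converges in $d_{\mathrm{BL}}$ to $(x_t,e_t)$ uniformly on $[0,T_N]$, provided $T_N\to\infty$ slowly enough relative to $N$. The delicate point is that the score $\nabla\log q_{t_n}$ must in fact be read along the \emph{discrete} marginals $\widehat q_{t_n}=\mathrm{law}(x^\pi_n)$, not along the true $q_{t_n}$, so the Gr\"onwall argument has to be run simultaneously on the pair (trajectory error, marginal/score error); this is where Lipschitz-in-the-measure control from Assumption~\ref{assumption} is used, and where the polynomial growth of $e^{\alpha_t+\beta_t+\gamma_t}$ in $t$ forces $\e$ to shrink faster than $1/T_N$.

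For the approximation piece, an a priori tightness bound confines the trajectories to a compact set with high probability, so by the universal approximation theorem \citep{cybenko1989approximation} one may choose $m^{\mathrm{enc}}_n,m^{\mathrm{dec}}_n$ within $\eta_N$ in sup-norm of the exact maps above, and feeding this perturbation through the same Gr\"onwall estimate keeps the resulting ANF within $o(1)$ of the exact-field leapfrog once $\eta_N\to 0$ fast enough. Combining the three pieces by the triangle inequality gives $d_{\mathrm{BL}}\big(\mathrm{law}(x^\pi_N,e^\pi_N),\,p\,\delta_0\big)\to 0$, i.e. $(x^\pi_N,e^\pi_N)\to(x_\infty,e_\infty)$ in distribution. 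I expect the crux to be the discretisation piece: because $\nabla\log q_t$ depends on the law of the process, the scheme is of McKean--Vlasov type and the ODE cannot be treated as a fixed vector field, so one must show the self-consistent discrete marginals stay close to the true ones while the stabilising coefficient blows up polynomially in $t$ and $\alpha_t$ is singular at $t=0$ — obtaining a Gr\"onwall constant that does not overwhelm the joint $N\to\infty$, $T_N\to\infty$ limit is precisely what Assumption~\ref{assumption} is engineered to provide.
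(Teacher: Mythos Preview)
Your three-piece decomposition and identification of (\ref{eq:x_nn_main})--(\ref{eq:e_nn_main}) as a leapfrog/midpoint integrator match the paper's strategy. However, the step you single out as ``the crux'' --- the McKean--Vlasov self-consistency --- is a misconception that would derail the argument as you have written it. The theorem is existential: we need only \emph{find} encoders $m^{\text{enc}}_{n}$ close to some target functions, and nothing forces those targets to involve the \emph{discrete} marginals $\widehat q_{t_n}=\mathrm{law}(x^\pi_n)$. The paper simply asks $m^{\text{enc}}_{n+1}$ to approximate, on a compact set, the fixed map $x\mapsto -e^{\alpha_{2n\epsilon}+\beta_{2n\epsilon}+\gamma_{2n\epsilon}}\nabla\log\tfrac{q_{2n\epsilon}(x)}{p(x)}$, where $q_{2n\epsilon}$ is the marginal of the \emph{true} ODE --- a deterministic function of $x$ once $q_0\in\gQ$ is fixed. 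The scheme is then the discretisation of a fixed time-dependent vector field, and Assumption~\ref{assumption} is precisely a spatial Lipschitz and second-derivative bound on that fixed $g$; it supplies no Lipschitz-in-the-measure control, so the coupled trajectory/marginal Gr\"onwall you propose cannot be closed with the stated hypotheses. Dropping the self-consistency concern removes what you call the delicate point entirely.

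On the remaining technicalities your outline and the paper diverge only mildly. The paper establishes \emph{pointwise} convergence $\|x^\pi_N-x_T\|\to 0$ on growing boxes $[-B,B]^d$ via a diagonal choice ($T=B$, tolerance $1/B$), rather than arguing tightness with high probability and working in $d_{\mathrm{BL}}$; pointwise-a.s.\ indistinguishability plus convergence in law of the ODE is then upgraded to convergence in law of the ANF by a short Portmanteau argument (Lemma~\ref{lem:conv_indistinguishable}), with joint convergence following because $e_\infty$ is a constant. And rather than a Gr\"onwall bound, the paper controls the propagated midpoint-integration error by solving the coupled recursion for $d^x_n,d^e_n$ explicitly with generating functions (Lemma~\ref{lem:recursive_error}); your Gr\"onwall route would work equally well once the vector field is treated as fixed.
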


\begin{table*}[ht]
\centering
\resizebox{0.98\textwidth}{!}{%
\begin{tabular}{cc|cc|cccc|c|cc}
\toprule 
& & {\small PixelCNN${}^1$} & {\small PixelIQN${}^1$} & {\small i-ResNet${}^2$} & {\small Glow${}^2$} & {\small Residual Flow${}^2$} & {\small VAE+Glow${}^3$} & {\small ANF} & {\small DCGAN${}^4$} & {\small WGAN-GP (TTUR)${}^4$}
\\ 
\midrule
IS & ($\uparrow$) & 4.60 & 5.29 & {--} & {--} & {--} & {--} & \textbf{6.49} & 6.16 & 7.86 \\
FID & ($\downarrow$) & 65.93 & 49.46 & 65.01 & 46.90 & 46.37 & 42.14 & \textbf{30.60} & 37.7 & 29.3 (24.8) 
\\
\bottomrule
\end{tabular}
}
\caption{\small Evaluation on Inception Score (IS, the higher the better) and Fréchet Inception Distance (FID, the lower the better) of models trained on CIFAR 10. Results taken from ${}^1$\citet{ostrovski2018autoregressive}, ${}^2$\citet{chen2019residualflows}, ${}^3$\citet{vaeflow}, and ${}^4$\citet{gulrajani2017improved,heusel2017gans}. 
Parenthesis indicates two time-scale update rule for WGAN-GP.
}
\label{tab:scores}
\end{table*}
\begin{figure}[t!]
    \centering
    \includegraphics[width=0.48\textwidth]{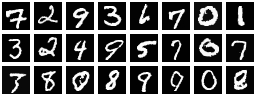}
    
    \vspace{1mm}
    
    \includegraphics[width=0.48\textwidth]{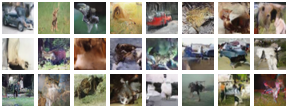}
    
    \vspace{1mm}
    
    \includegraphics[width=0.48\textwidth]{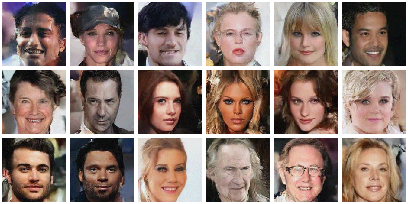}
    \vspace{-3mm}
    \caption{\small Unconditionally generated samples from models trained on MNIST (\emph{top row}), CIFAR 10 (\emph{middle row}), and 5-bit CelebA (\emph{bottom row}). 
    }
    \label{fig:samples}
\end{figure}

\section{Related Work}
In the literature of normalizing flows, much work has been done to improve expressivity while maintaining computational tractability. 
For example, \citet{dinh2014nice,dinh2016density} introduce an affine coupling that partitions the features into two groups so that the Jacobian is a block-triangular matrix. 
The resulting mapping is relatively restricted since it only models partial dependency. 
\citet{kingma2016improved} further exploits the ordered dependency by constructing an inverse autoregressive mapping but its inversion requires a computation time linear in dimensionality~\citep{papamakarios2017masked}, 
and does not even have a closed-form formula in the more general non-affine setting \citep{huang2018neural}.
\citet{behrmann2018invertible} propose a residual form of $f$ whose Jacobian log-determinant can be stochastically estimated~\citep{chen2019residualflows} but inversion is achieved iteratively, not in one pass. 

Normalizing flows have also been used as (1) an inference machine in the context of variational inference for continuous latent variable models~\citep{kingma2016improved,tomczak2016improving,berg2018sylvester}, and (2) a trainable component of the latent variable model \citep{chen2016variational,agrawal2016deep,huang2017learnable}.
ANFs lie at the intersection of normalizing flows and latent variable models when a specific type of block-conditioning transformation is applied, and allow us to 
unifyingly view
flow-based priors as marginal transformation in the space of $e$, and amortized flows for improving posterior inference as different variants of the encoding transform. 
Another way of improving the inference machine's expressivity is to consider a hierarchical model; in fact, ANFs can be viewed as a generalization of the auxiliary variable method for hierarchical variational inference \citep{agakov2004auxiliary,ranganath2016hierarchical}; see Appendix \ref{app:anf_vi} for the connection and \ref{app:related} for more discussion on future direction. 

Finally, \citet{dupont2019augmented} also employs augmentation to improve the expressivity and stability of a neural ODE \citep{chen2018neural}, and they believe such a method can be used to reduce the cost of training a continuous normalizing flow \citep{grathwohl2018ffjord}.

\begin{figure}
    \centering
    \includegraphics[width=0.44\textwidth]{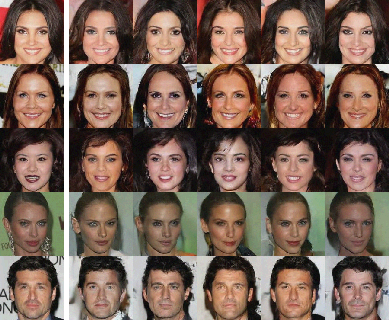}
    \caption{Lossy reconstruction. \emph{Left}: original data. \emph{Right}: reconstruction from the topmost representation.}
    \label{fig:recon}
\end{figure}

\begin{figure*}
    \centering
    \includegraphics[width=0.24\textwidth,clip=true, trim=0 4px 0 0px]{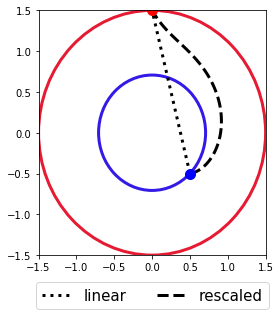}
    \hfill
    \includegraphics[width=0.725\textwidth,clip=true, trim=0 64px 0 0px]{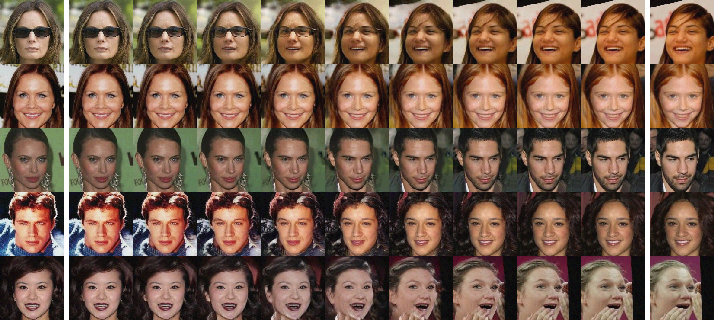}
    \caption{\emph{Left}: comparison of linear and rescaled interpolations. \emph{Right}: rescaled interpolation of input data (first and last columns).}
    \label{fig:interpolate}
\end{figure*}

\section{Large-Scale Experiments}

\subsection{Quantitative results}
In the more realistic settings, we augment the data with a hierarchy of noise, as described in the last part of Section \ref{sec:anf}. 
See Appendix \ref{app:exp} for more experimental details. 

\paragraph{Stochastic vs. deterministic features} We conduct an ablation study on the effect of composing multiple encoding-decoding transformations ($N$ steps) versus increasing the number of stochastic layers ($L$ layers). 
We monitor the bits per dim (BPD) of the test set of CIFAR 10~\citep{krizhevsky2009learning} throughout training.
Figure \ref{fig:layers} shows that increasing the number of flow layers can more effectively improve the likelihood of the model than increasing the number of stochastic layers.

\paragraph{Density estimation}
We perform density modelling on the MNIST handwritten digit dataset~\citep{lecun1998gradient}, CIFAR 10~\citep{krizhevsky2009learning}, downscaled versions of ImageNet ($32\times32$ and $64\times64$)~\citep{oord2016pixel} and the celebrity face dataset CelebA~\citep{liu2015faceattributes}, and compare with other state-of-the-art density models.  
In Table \ref{tab:density}, we see that ANFs set a few new records in terms of BPD on the standard benchmarks in the non-autoregressive category.
We use the importance sampling described in Section \ref{sec:amle} to estimate the log likelihood. 
The augmentation gap is around 0.01 BPD for all benchmarks, indicating the augmented flow is capable of achieving good likelihood estimate and high inference precision at the same time. 

\subsection{Qualitative results}

\paragraph{Sample quality} For quantitative evaluation of sample quality, we report the Inception Score (IS) \citep{salimans2016improved} and the Fréchet Inception Distance (FID) \citep{heusel2017gans}, expanding the table of \citet{ostrovski2018autoregressive}. 
We found the FID score of WGAN-GP~\citep{gulrajani2017improved} reported in \citet{ostrovski2018autoregressive} is worse than the one reported in the literature, so we include the original values of IS and FID of GANs from the original works of \citet{gulrajani2017improved} and \citet{heusel2017gans} for more realistic comparison.
In Table \ref{tab:scores}, we see that ANF obtains better scores than all the other explicit density models, and is close to matching the FID of the orignal WGAN-GP by~\citet{gulrajani2017improved}. 
The generated samples are presented in Figure~\ref{fig:samples} and Appendix \ref{app:samples}.
Since the encoding-decoding transformation has a receptive field that is wide enough to cover the entire raw data, the generated samples also look more globally coherent. 

\paragraph{Lossy reconstruction}
As a hierarchical model, ANF can be used to perform inference for the higher level representation, and sample the lower level details for reconstruction. 
We do this by sampling $e_1,...,e_L$, obtaining the corresponding $y,z_1,...,z_L \leftarrow G_\pi(x, e_1, ..., e_L)$, randomizing all but the last representations $y', z_1',..., z_{L-1}' \sim \gN(0,I)$, and reconstructing from the new joint representation $x', e_1', ..., e_L' \leftarrow G_\pi^{-1}(y',z_1', ..., z_{L-1}', z_L)$. 
Similar to other hierarchical models~\citep{gulrajani2017pixelvae, belghazi2018hierarchical}, 
ANF is also capable of retaining global, semantic information of the raw data stored in its higher level code; this is shown in Figure~\ref{fig:recon}.

\paragraph{Interpolation}
We also perform interpolation in the latent space between real images. 
Previous works such as \citet{kingma2018glow} perform linear interpolation of the form $h(u, v, t)=tu+(1-t)v$ for $t\in[0,1]$, which we observe has a non-smooth transition (e.g. sudden color change). 
We hypothesize this is due to the fact that convex combination of two vectors would result in an increase and then a decrease in the density of the standard Gaussian prior. 
This is undesirable since the interpolated points are atypical because Gaussian samples are known to concentrate around the shell (of radius proportional to square root dimensionality). 
Hence, we propose the \emph{rescaled interpolation}
\begin{align}
h'(u,v,t) = \frac{h(||u||,||v||,t)}{||h(u,v,t)||}\cdot h(u,v,t)
\end{align}
where $||\cdot||$ denotes the L2 norm, to make sure the scale of the resulting point is a linear interpolation of the scales of the two input vectors.
The result in Figure~\ref{fig:interpolate} shows that the transition is extremely smooth (see Appendix \ref{app:inter} for a side-by-side comparison with linear interpolation) and the intermediate images are realistic looking.

\section{Conclusion}
In this work, we propose the \emph{Augmented Normalizing Flows} and a corresponding variational lower bound on the marginal likelihood.
We show that the proposed method can be used to approximate a Hamiltonian dynamical system as a universal transport map and achieves competitive or better results than state-of-the-art flow-based methods.

\iftrue
\section*{Acknowledgements}{
CW would like to thank 
Matt Hoffman for a discussion on deterministic Langevin transitions and Amirhossein Taghvaei for referencing the work of Wang \& Li. 
Special thanks to people who have provided their feedback and advice during discussion or while reviewing the manuscript, including Valentin Thomas, Joey Bose, and Eeshan Dhekane; to
Taoli Cheng and Bogdan Mazoure for volunteering for the internal review at Mila; and to Ahmed Touati, Christos Tsirigotis and Jose Gallego for proofreading parts of the proof. 
}
\fi

\bibliography{reference}
\bibliographystyle{icml2020}

\clearpage
\appendix
\onecolumn

\section{Interpolation}
\label{app:inter}
We compare linear interpolation with rescaled interpolation (rescaling is done separately for each stochastic layer). 
We see that the middle points of linear interpolation tend to be more yellowish, and rescaled interpolation results in a smoother and direct transition between two input vectors.
\begin{figure*}[h!]
    \centering
    \includegraphics[width=0.49\textwidth]{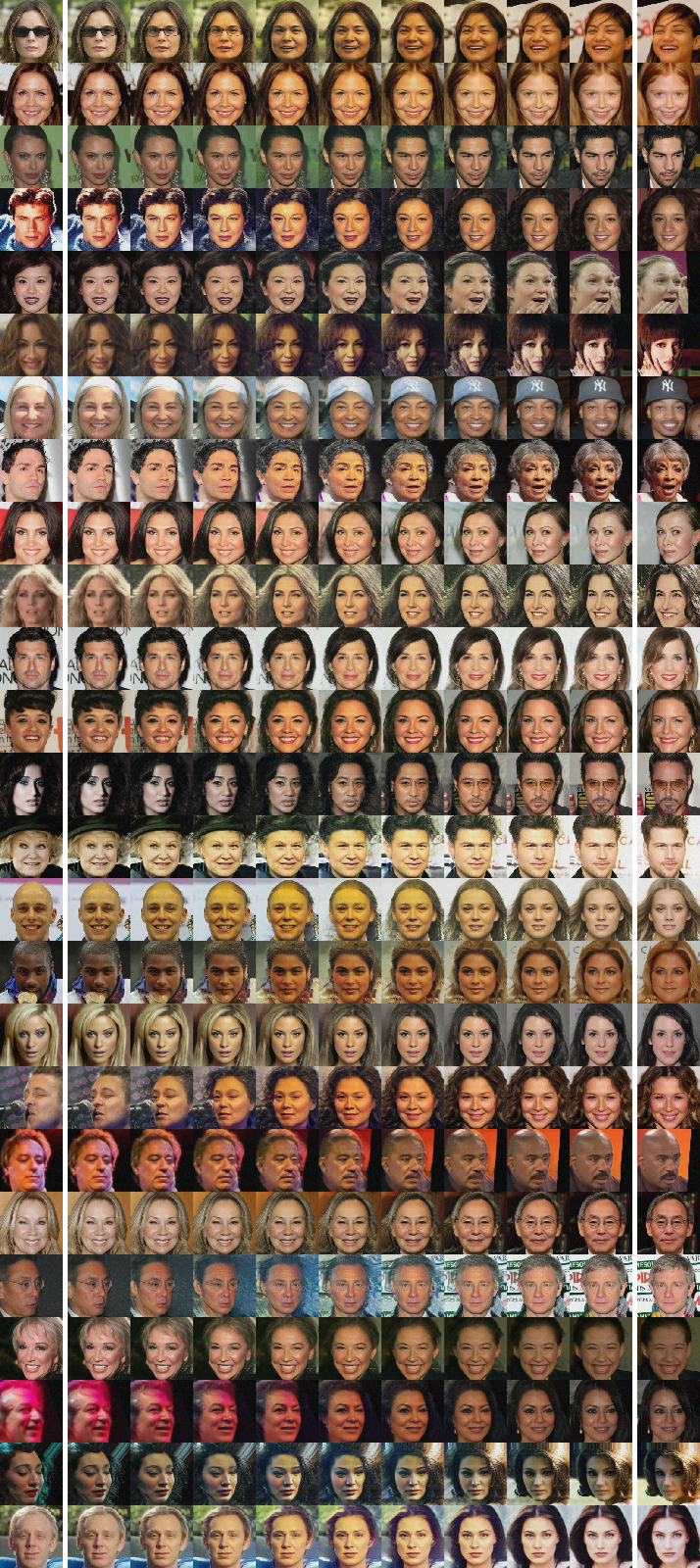}
    \hfill
    \includegraphics[width=0.49\textwidth]{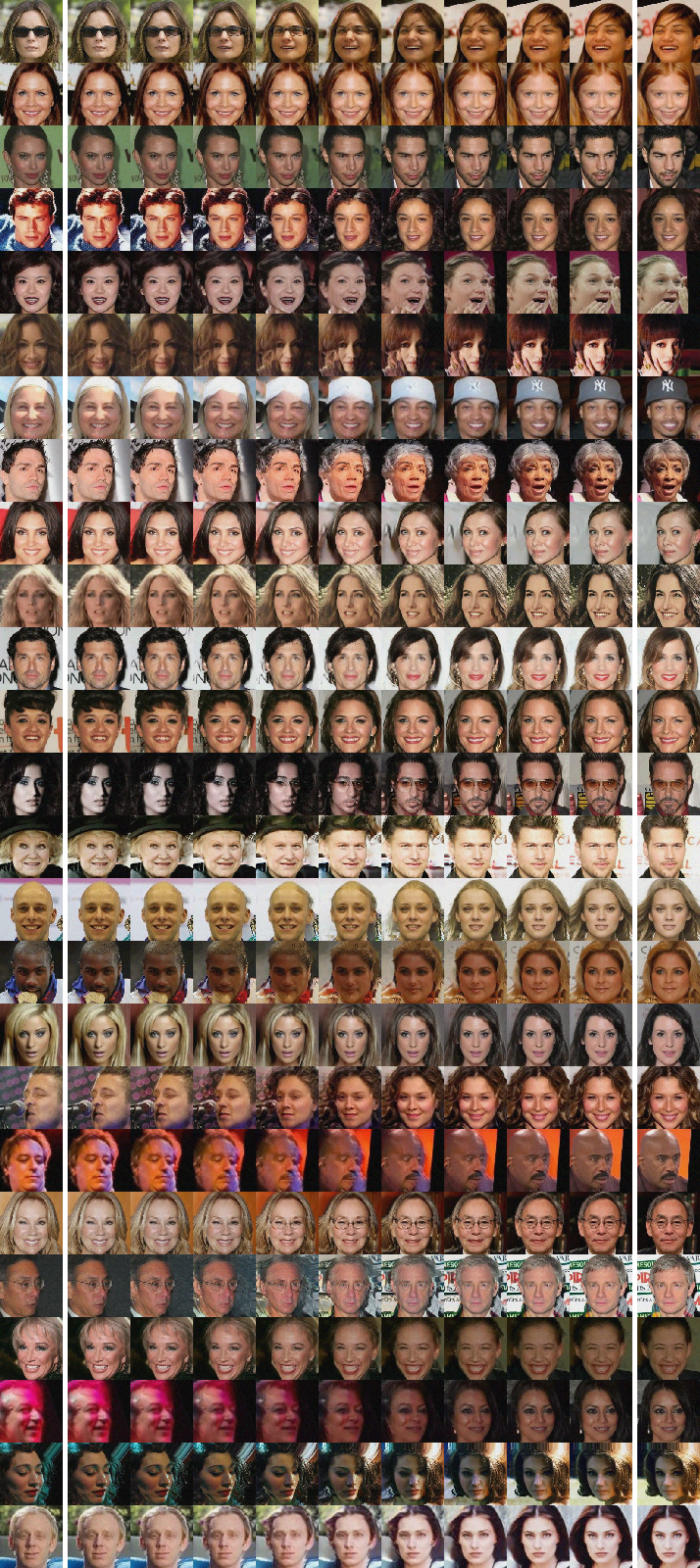}
    \caption{(\emph{left}) linear interpolation, (\emph{right}) rescaled interpolation}
    \label{fig:interpolate_compare}
\end{figure*}


\clearpage

\section{Experiment Details}
\label{app:exp}
To model natural images, we employ a more intricate architecture with a higher modeling capacity described in \ref{app:raeb}. 
Section \ref{app:init} describes the parameter initialization scheme and parameterization constraints that are imposed to stablize training. 
In \ref{app:warm}, we propose an objective-annealing technique that biases the autoencoding transform to 
focus on Gaussianizing the raw data more at the early stage of training.
We found this to be helpful for optimization. 
All the hyperparameters used in the experiments are summarized in \ref{app:hparam}.

\subsection{Residual autoencoding blocks}
\label{app:raeb}
\begin{figure*}[h!]
	\centering
	\includegraphics[width=0.95\textwidth]{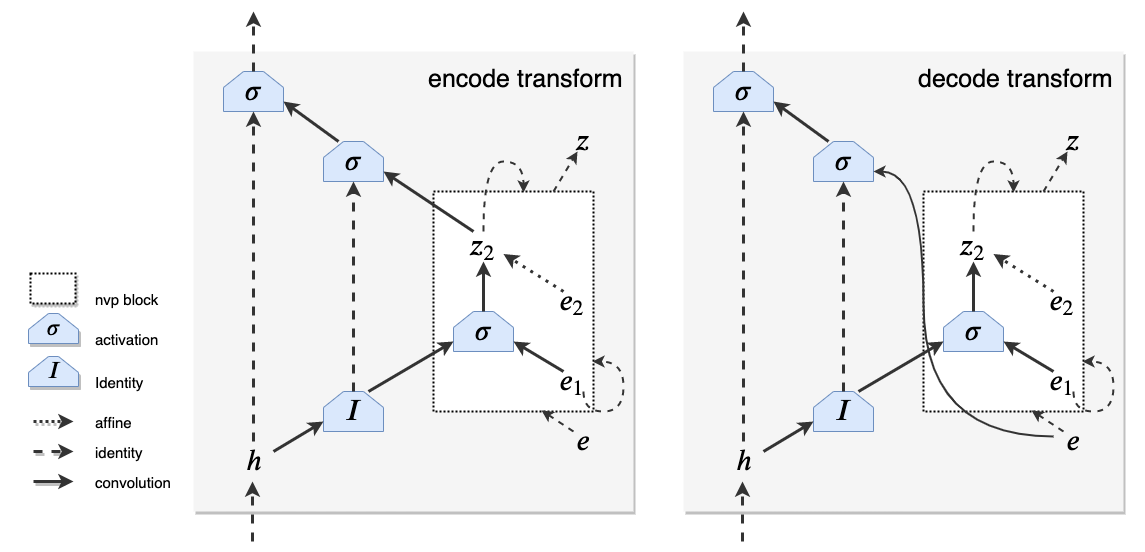}
	\caption{Architecture of a single encode transform block and a single decode transform block.}
	\label{fig:autoencoding_block}
\end{figure*}

We parameterize all of the feedforward layers with weight normalization \citep{salimans2016weight}. 
For the encoding transforms, we first map $x$ using a convolutional layer composed with an activation function, followed by a pooling layer to obtain the first set of conditional features $h$. 
We now use these features to conditionally transform each of the stochastic units in the order $e_1,...,e_L$, using an \emph{encoding block}.
The encoding block outputs a set of modified conditional features, which is then used to modify the next $e_l$ (except when the feature map size is halved, in which case an additional pair of convolutional layer and pooling layer is first applied).

\paragraph{Encode transform and decode transform} 
Each encoding block has a nested residual structure, taking in the conditional features as input to transform the corresponding stochastic unit $e$, illustrated in Figure~\ref{fig:autoencoding_block}. 
The conditional features are first convolved and then fed into a nested Real NVP block.   
The output of the Real NVP block (applied twice, see below) is then convolved and added to the convolved conditional features.
We apply non-linearity before convolving again and adding to the the original conditional features via skip connection.

The decode transform is similar, except we convolve the incoming stochastic unit to modify the conditional features, thus having a shorter computational path to reconstruct the data.

\paragraph{Real NVP block}
The stochastic unit is split into two halves ($e_1$ and $e_2$) using the checkerboard mask. 
The convolved conditional feature is concatenated with the masked stochastic unit (the part that is not masked is denoted as $e_1$) to transform the part of the stochastic unit being masked out ($e_2$). 
The same Real NVP block is reused (using the same set of parameters), alternating the pattern of the mask to transform the other half of the stochastic unit (with $e_1$ and $e_2$ swapped). 
We found sharing the parameters of these two consecutive Real NVP transformations to improve the convergence of the likelihood.

\subsection{Variational dequantization}
We use the variational dequantization proposed by ~\citet{ho2019flow++} in all our density estimation experiments. 
We first map the input image $x$ into a deterministic feature $x'$ space using a convolutional network of the following form: 
$$\texttt{conv(stride=2) -> act -> conv -> act -> bilinear\_upsample(stride=2)}$$
where \texttt{act} denotes activation function. 
Note that the input to this convolution network is rescaled to $[0,1]$ via $x/(2^\texttt{n\_bits}-1)$, where \texttt{n\_bits} is the number of bits.
We then use the Real NVP block to transform a standard Gaussian noise, where $x'$ acts as the conditional feature. 
We apply two Real NVP blocks to obtain $u\_logit$, and transform it into $u$ using the logistic sigmoid activation function so that each element of $u$ lies within $(0,1)$.
We then perturb the data by $(x+u)/2^{\texttt{n\_bits}}$, which is then passed through a \texttt{clip} operator for numerical stability. 
We define \texttt{clip} to be 
$$\texttt{clip}(x, \delta) = x\cdot(1-\delta) + 0.5\cdot\delta$$
where $\delta=0.1$. 
Finally, we pass the clipped value into the logit function (inverse sigmoid) to obtain the dequantized data. 
We have taken into account the probability density of $u\_logit$ and all the changes of variable (i.e. sigmoid, rescaling, clipping, and logit) when computing the lower bound.

\subsection{Initialization and parameterization constraint}
\label{app:init}
Unless it is otherwise stated, we initialize all the convolutional kernels using truncated normal distribution with standard deviation $0.1$, and for weight normalization, the rescaling parameter is set to be $1.0$ and the shifting parameter $0.0$. 
Only for the last layer of the Real NVP block we initialize $g$ to be $0.0$. 
We apply a split operator to this last layer to obtain a ``shift'' coefficient and ``log scale'' coefficient for affine transformation. 
The last layer has double the dimensionality of the stochastic unit to be transformed.
The split operator simply splits it in two parts. 
For the log scale coefficient, we apply the log-sigmoid function to make sure after exponentiation, it is bounded between $0$ and $1$ (similar to \citet{kingma2018glow}). 
Since the pre-log-sigmoid is initialized to be $0$, we add in a constant that depends on the total number of transformations that will be apply to the stochastic unit such that the overall transformation (after composition) will rescale the raw input by a factor of $0.95$ (without considering activation normalization). 
This is to ensure the entire transformation is more robust to variation of depths at initialization. 

We also apply activation normalization \citep{kingma2018glow} with data-dependent initialization that standardizes the transformed feature, after each encoding transform and each decoding transform. 
We clip the log scale coefficient at $\pm 2.5$.

\subsection{Deterministic warm up}
\label{app:warm}
Due to our choice of flow, our instantiation of ANF resembles a VAE. 
It has been previously shown that starting off with less regularization and noise injection is beneficial to training, a technique known as deterministic warm up \citep{raiko2007building,sonderby2016ladder}. 
Similarly, if we expand the objective of ANF with autoencoding transform (affine coupling), we get
\begin{align*}
\log p(x,e) = \log\gN(y_T; 0,I) + \log\gN(z_T; 0,I) + \sum_{t=1}^T \sum_{j=1}^{d} \log s_{\pi_t}^\text{enc}(y_{t-1})_j + \sum_{t=1}^T \sum_{j=1}^{d} \log s_{\pi_t}^\text{dec}(z_{t})_j 
\end{align*}
where $d$ is the dimensionality of the augmented data $e$, and $(y_t, z_t)$ are defined recursively as 
\begin{align*}
z_t &= s_{\pi_t}^\text{enc}(y_{t-1})\odot z_{t-1} + m_{\pi_t}^\text{enc}(y_{t-1}) \\
y_t &= s_{\pi_t}^\text{dec}(z_t)\odot y_{t-1} + m_{\pi_t}^\text{dec}(z_t)
\end{align*} 
with the initial values $z_0=e$, $y_0=x$.
We modify the objective by lowering the weighting of $s_{\pi_t}^\text{enc}$ and $\log\gN(z_T; 0,I)$ such that the network can focus more on Gaussianizing the raw input $x$. 
We defined the modified objective as
$$\gL(\pi; x,e,\beta):= 
\log\gN(y_T; 0,I) + \sum_{t=1}^T \sum_{j=1}^{d} \log s_{\pi_t}^\text{enc}(y_{t-1})_j + \beta\left(\log\gN(z_T; 0,I) + \sum_{t=1}^T \sum_{j=1}^{d} \log s_{\pi_t}^\text{dec}(z_{t})_j\right)$$
where $\pi$ is all the trainable parameters.
We linearly anneal the weighting coefficient $\beta$ from $0$ to $1$ for the first $\alpha$ iterations of the training.
Note that in practice we apply the same $\beta$ to all augmented data in the hierarchical setup.

\newpage
\subsection{Hyperparameters}
\label{app:hparam}
{Notation summary for hyperparameters:}
\vspace{-3mm}
\begin{itemize}\setlength\itemsep{0.0em}
\item $L$: number of stochastic units ($e_1,...,e_L$).
\item $N$: number of steps (encoding-decoding pairs).
\item $K$: number of samples for importance sampling.
\item $\lambda$: decoupled weight decay coefficient for Adam.
\item $c$: number of channels (all deterministic features). 
\item $c'$: number of channels for the $l$'th stochastic unit (stochastic features). 
Power denotes repetition.   
\item $\mu$: feature map size (squared).  
\item $k$: kernel size (except for the data space layer). 
\item $b$: batch size. 
\item $s$: step size.
\item $a$: annealing schedule (number of parameter updates).
\item $u$: number of updates.
\item $\sigma$: activation function
\end{itemize}

\begin{table*}[ht]
\centering
\begin{tabular}{cccccc} 
\toprule
{\bf } & {\bf MNIST} & {\bf CIFAR 10} & {\bf ImageNet 32} & {\bf ImageNet 64} & {\bf CelebA-HQ} \\ 
\midrule
$L$ & 4 & 8 & 8 & 6 & 5\\
$N$ & 5 & 5 & 5 & 5 & 5\\
$K$ & 5000 & 1000 & 1000 & 1000 & 1000\\
$\lambda$ & 1e-5 & 1e-5 & 0 & 0 & 0\\
$c$ & 64 & 160 & 256 & 256 & 160\\
$c'$ & 2,2,2,2 & 32,28,...,4 & 32,28,...,4 & 24,20,...,4 & 20,16,...,4\\
$\mu$ & 14\textsuperscript{2},7\textsuperscript{2} & 16\textsuperscript{4},8\textsuperscript{4} & 16\textsuperscript{4},8\textsuperscript{4} & 32\textsuperscript{2},16\textsuperscript{2},8\textsuperscript{2} & 128,64,32,16,8 \\
$k$ & 3 & 3 & 3 & 3 & 3\\
$b$ & 64 & 64 & 64 & 64 & 12\\
$s$ & 0.001 & 0.001 & 0.001 & 0.0005 & 0.0005\\
$a$ & 20K & 20K & 20K & 20K & 20K\\
$u$ & 1M & 1M & 2M & 2M & 2M\\
$\sigma$ & Swish & Swish & Swish & Swish & Swish\\
\bottomrule
\end{tabular}
\caption{Hyperparameter details of density estimation tasks.}
\label{tab:exp_details}
\end{table*}

\clearpage

\section{Extended related work and future direction}
\label{app:related}
\paragraph{Normalizing flows.}
The term \emph{Normalizing Flow} was originally coined by \citet{tabak2010density,tabak2013family} where it was used for density estimation.
Differentiable bijective models were first introduced to the deep learning community as likelihood-based generative models by \citet{rippel2013high,dinh2014nice}, and as an inference machine by \citet{rezende2015variational}. 
Most development within this line of research is dedicated to improving the expressivity of the bijective mapping while maintaining computational tractability of the log-determinant of the Jacobian. 
Each family of flows can be characterized by the ``trick'' used to achieve this, e.g.
\begin{itemize}
\item \emph{Partial ordered dependency}. 
If the mapping has a partial and ordered dependency, its Jacobian matrix will be a triangular matrix, the determinant of which can be computed in linear time. This includes the following:
\begin{itemize}
    \item \citet{dinh2014nice,dinh2016density,kingma2018glow,ho2019flow++} use a block-wise conditioning in the mapping, and
    \item \citet{kingma2016improved,chen2016variational, papamakarios2017masked,huang2018neural} generalize block-wise dependency to temporal dependency wherein all the variables prior to the current variable of a given ordering are inputs of the conditioning to transform the current variable.
\end{itemize}
\item \emph{Low rank transform}.
If the mapping is of a particular residual form, the Jacobian determinant can be computed readily using the matrix determinant lemma \citep{rezende2015variational}
or its higher rank generalization \citep{berg2018sylvester}.
\item \emph{Lipschitz residual flow}. If the nonlinear block of a residual mapping is no more than 1-Lipschitz, the overall mapping is invertible, and its Jacobian can be estimated using power series expansion, the Hutchinsons trace estimator \citep{behrmann2018invertible} and the Russian roulette estimator \citep{chen2019residualflows}. 
\item \emph{Special convolutional forms}. Certain structure of convolutional kernels can also be designed to to ensure tractability, such as via using $1\times1$ convolution \citep{kingma2018glow}, masking \citep{oord2016pixel,NIPS2016_6527,hoogeboom2019emerging,song2019mintnet,ma2019macow} or imposing certain repeated structure \citep{karami2019invertible}.
\end{itemize}

In this work, we introduce the augmentation trick, which generalizes flow-based methods in an orthogonal yet complementary manner. 
In particular, we employ the coupling proposed by \citet{dinh2016density} to transform the augmented data; one potential alternative is to replace it with any of the tricks mentioned above. 

\paragraph{Architectures and parameter sharing.} As the autoencoding transform we use generalizes VAEs and hierarchical VAEs, one potential direction is to consider parameterizations that have shared components which are shown to be conducive to training, such as the ResNet with top-downn inference \citep{kingma2016improved} and the bidirectional inference machine \citep{maaloe2019biva}.
As a generalization of VAEs, ANFs can also be applied to latent variable models of different graphical representations, such as variational recurrent neural networks \citep{chung2015recurrent} and models of sets \citep{edwards2016towards}; for example, the \emph{set flow} proposed by \citet{rasul2019set} is an instance of permutation-invariant ANF applied to sets. 
Another avenue for improving parameter efficiency is to consider tying the weights of different steps of transformations. 
As our theory suggests, consecutive transformations of the discretized Hamiltonian ODE would differ only slightly if the time derivatives are smooth enough. 
This means it would be sufficient to consider a single network which also takes in time embedding as input for all transformations.

\paragraph{Approximate Hamiltonian flows.}
Our approximation theory builds on the result of \citet{wang2019accelerated}, which follows the optimal control framework of \citet{wibisono2016variational}.
The augmented variable is treated as the costate, which is deterministically dependent on the state, i.e. the input data. 
Therefore we set the initial augmented distribution to be a Dirac point mass for the approximation theory to hold. 
Our theorem can be improved if one can show some time trajectories with 
the augmented variable drawn independently from a non-degenerate initial distribution are convergent to the prior distribution. 
We leave this for future work. 
Meanwhile, the same proof technique can be used to study the approximation capability of different families of flows.
In particular, the residual flows \citep{behrmann2018invertible,chen2019residualflows} and their continuous counterpart \citep{chen2018neural,grathwohl2018ffjord} can be used to approximate the deterministic Langevin diffusion, since
(1) one can replace the Brownian motion term with the gradient of the log marginal density without modifying its Fokker-Planck equation (see \citet{hoffman2019langevin} or the appendix of \citet{wang2019accelerated}) and (2) the first-order Langevin dynamic is known to be convergent to its stationary distribution \citep{roberts1996exponential}. 

\paragraph{Gradient-based flows.}
As the theory suggests, gradient of the potential can be used to guide the evolution of the particle.
This has been previously explored by \citet{duvenaud2016early}. 
\citet{salimans2015markov} on the other hand propose a hierarchical model inspired by the Hamiltonian dynamic, and \citet{song2017nice,levy2017generalizing} generalize Hamiltonian Monte Carlo (HMC) with trainable neural components. 
Similarly, one can parameterize a Gradient-based augmented generative flow to model the data distribution.

\paragraph{Normalizing flows for variational inference.}
The most well-known application of normalizing flows is to improve the variational distribution to approximate posterior distribution of (1) the latent representations \citep{rezende2015variational, kingma2016improved,tomczak2016improving,berg2018sylvester} and (2) the parameters of neural networks \citep{louizos2017multiplicative, krueger2017bayesian, huang2019stochastic}. 
ANF can also be applied to inference problems, with slight modification of the target potential. 
We show in Appendix \ref{app:anf_vi} that one can augment the target distribution with an independent distribution and infer the joint target altogether.
This boils down to the hierarchical variational method \citep{agakov2004auxiliary,ranganath2016hierarchical} as a special case when one step of autoencoding transform is applied. 

\paragraph{Variational gap.}
The joint likelihood that we maximize is a variational objective lower-bounding the marginal likelihood of the data. 
One potential avenue for improvement is to reduce this bias (the augmentation gap) throughout training, by closing up the gap via importance sampling \citep{burda2015importance} or using an unbiased estimate of the marginal likelihood \citep{luo2020sumo}.

\paragraph{Representation learning.}
Considering invertible transformations in an augmented data space allows us to sidestep the topology-preserving property of a homeomorphism. 
The issue of this property is discussed and addressed by \citet{cornish2019localised} by converting the flow into a latent-variable model. 
\citet{dupont2019augmented} adopt the same technique by augmenting the data space and apply the augmented continuous time flow to discriminative tasks. 
We hypothesize this can potentially improve the representation learned by an invertible model, for example in a semi-supervised setting \citep{nalisnick2019hybrid,atanov2019semi} or as a component of a reversible model for memory-efficient backpropagation \citep{gomez2017reversible}.

\clearpage

\section{Augmented Normalizing Flows for Variational Inference}
\label{app:anf_vi}
Augmented normalizing flows can also be used for inference tasks where our goal is to approximate an unnormalized density $\tilde{p}(z)$ with a parametric distribution $q(z)$. 
This includes variational training of energy based models \citep{dai2017calibrating, zhai2016generative}, entropy regularized policy gradient in reinforcement learning \citep{mazoure2019leveraging, ward2019improving}, probability distillation \citep{oord2017parallel}, and variational Bayesian inference of latent variables \citep{kingma2013auto}. 

We focus on the case of variational inference (but the same technique can be used for other applications), where $\tilde{p(z)}=p(x,z)$, and our goal is to maximize the ELBO
$$\E_{z}\left[\log\frac{p(x,z)}{q(z)}\right]$$
where we can apply the standard change of variable to get $q(z)=q(e)\left|\frac{\partial g(e)}{\partial e}\right|^{-1}$ with $z=g(e)$ as described in Section~\ref{sec:background}.
Alternatively, we can augment the target distribution $\tilde{p}(z)$ with an independent $p(v)$, and jointly transform a base distribution $q(e)q(u)$ into $q(z,v)$ to approximate $\tilde{p}(z)p(v)$ via an invertible map $e,u\mapsto G(e, u)$.
Concretely, we maximize the following quantity
\begin{align}
\E_{z,v}\left[\log\frac{p(x,z)p(v)}{q(z,v)}\right]
=\E_{e,u}\left[\log\frac{p(x,G(e, u)|_1)p(G(e, u)|_2)}{q(e,u)}\left|\frac{\partial G(e,u)}{\partial (e,u)}\right|\right]
\label{eq:elbo_anf}
\end{align}
where $|_1$ and $|_2$ denote the first and the second coordinates, respectively.
This lower-bounds the ELBO since 
$$\E_{z}\left[\log\frac{p(x,z)}{q(z)}\right] - \E_{z,v}\left[\log\frac{p(x,z)p(v)}{q(z,v)}\right] = \E_{z,v}\left[\log \frac{q(v|z)}{p(v)}\right] = \E_z[\KL(q(v|z)||p(v))]$$
is non-negative. 

{\bf Auxiliary variable for hierarchical variational inference.} The above derivation for applying ANF to variational inference is reminiscent of the \textit{auxiliary variable method} \citep{agakov2004auxiliary, ranganath2016hierarchical}.
To see this, assume we parameterize $G(e,u)$ as the composition $g^\text{enc}\circ g^\text{dec}$, where 
\begin{align*}
g^{\text{enc}}(e, u) &= \texttt{concat}(e,\, s^\text{enc}(e)\odot u + m^\text{enc}(e)), \\
g^{\text{dec}}(e, u) &= \texttt{concat}(s^\text{dec}(u)\odot e + m^\text{dec}(u),\, u)
\end{align*}
with $s^\text{enc}, s^\text{dec}>0$.
Then Equation (\ref{eq:elbo_anf})
becomes
\begin{align*}
\E_{e,u}\left[\log\frac{p(x, s^\text{dec}(u)\odot e + m^\text{dec}(u))p( s^\text{enc}(z)\odot u + m^\text{enc}(z) )}{q(e,u)}+\log\sum_i s^\text{dec}(u)_i + \log\sum_j s^\text{enc}(z)_j\right]
\end{align*}
where $z:= s^\text{dec}(u)\odot e + m^\text{dec}(u)$, which is equivalent to
\begin{align*}
\E_{z,u}\left[\log\frac{p(x,z)\gN(u ;  - m^\text{enc}(e)/s^\text{enc}(e), s^\text{enc}(e)^{-2})}{\gN(z; m^\text{dec}(u), s^\text{dec}(u)^2)q(u)} \right] = \E_{z,u}\left[\log\frac{p(x,z) r(u|z)}{q(z|u)q(u)} \right] 
\end{align*}
where $q(z|u)=\gN(z; m^\text{dec}(u), s^\text{dec}(u)^2)$ and $r(u|z)=\gN(u ;  - m^\text{enc}(e)/s^\text{enc}(e), s^\text{enc}(e)^{-2})$.
This shows hierarchical variational methods are a special case of ANF, and the latter can potentially be used to improve the joint expressivity of the former through additional composition.

\newpage
\section{More samples}
\label{app:samples}
\subsection{CIFAR 10}
\begin{figure*}[h!]
    \centering
    \includegraphics[width=0.95\textwidth]{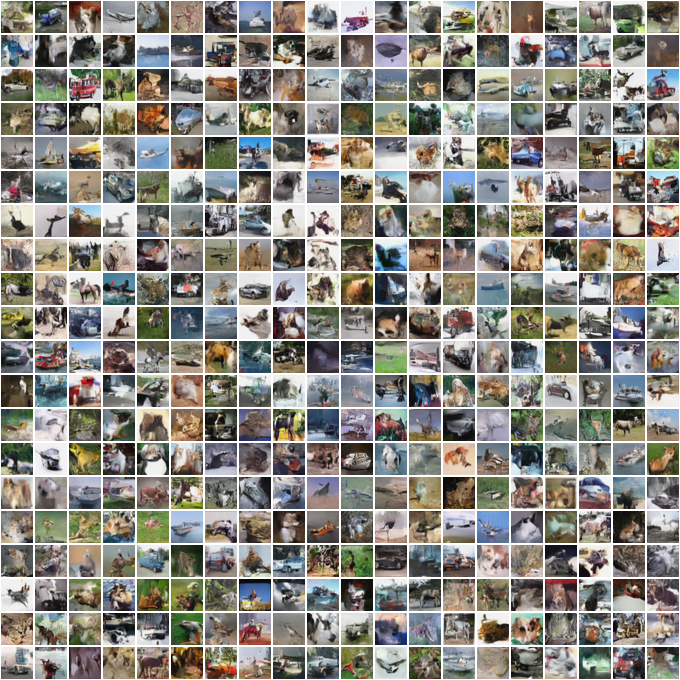}
    \caption{CIFAR 10 samples}
    \label{fig:cifar_grid}
\end{figure*}

\newpage
\subsection{Celeba 64}
\begin{figure*}[h!]
    \centering
    \includegraphics[width=0.95\textwidth]{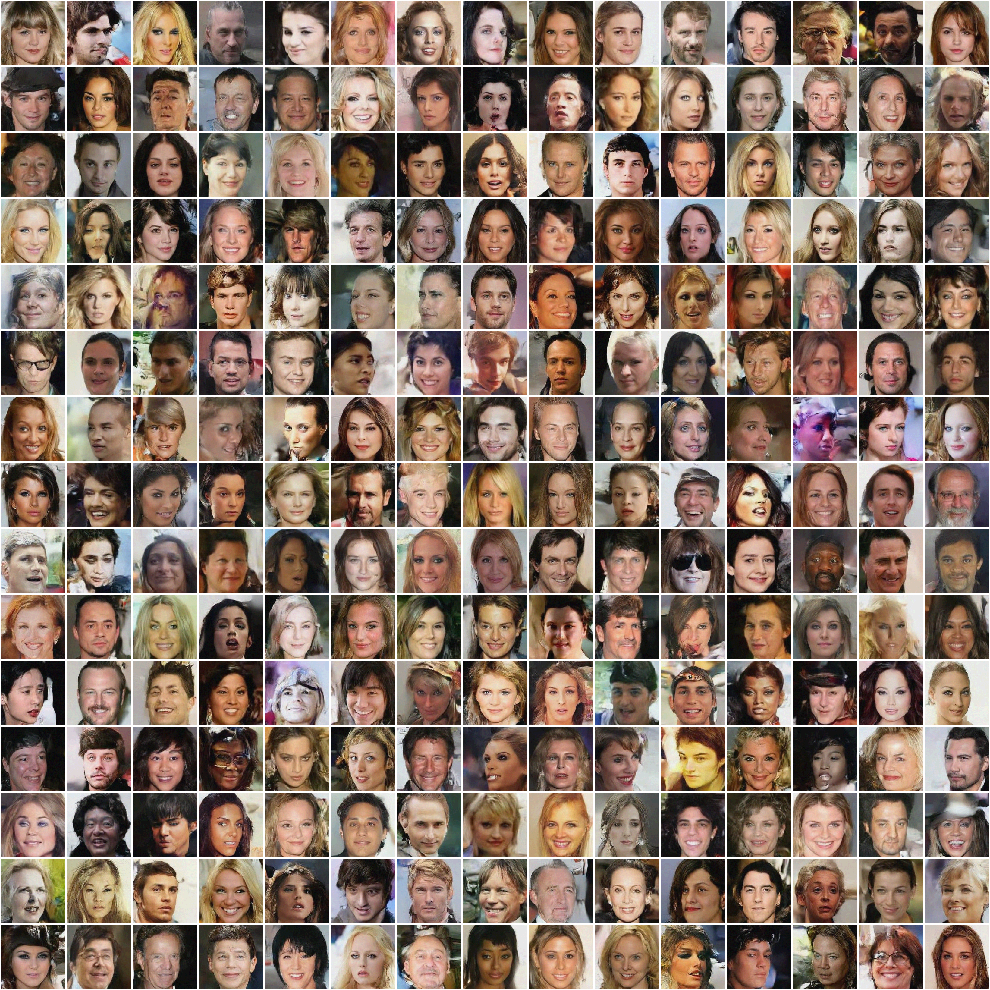}
    \caption{5-bit CelebA 64 samples}
    \label{fig:cifar_grid}
\end{figure*}

\newpage
\section{Proofs}
\label{app:proofs}

Define the scaling coefficients $\alpha_t = \log \frac{2}{t}$ and $\beta_t = \gamma_t = \log t^2$.
Let $p(x)$ be the standard normal density, and $q(x)$ be the data distribution.
Let $q_0=q$ and $\Phi:\gX\rightarrow\R$ be some continuous function. 
Define the following Hamiltonian ordinary differential equation (ODE):
\begin{align}
\dot{x}_t &= e^{\alpha_t-\gamma_t}e_t, && x_0\sim q_0 \label{eq:time_der_x}\\
\dot{e}_t &= -e^{\alpha_t+\beta_t+\gamma_t}\nabla\log \frac{q_t(x_t)}{p(x_t)}, &&e_0=\nabla\Phi(x_0) \label{eq:time_der_e}
\end{align}
where $\dot{x}_t$ and $\dot{e}_t$ are the time derivatives of $x$ and $e$ at time $t$, and $q_t$ is the marginal density of $x_t$. 

\begin{proposition}
For some convex $\Phi$, the trajectories of $x_t$ and $e_t$ following (\ref{eq:time_der_x},\ref{eq:time_der_e}) converge in distribution to $x_\infty$ and $e_\infty$, respectively, where $x_\infty\sim p(x)$ and $e_\infty\sim\delta_0$ (i.e. a point mass at $0$).
\label{prop:hode}
\end{proposition}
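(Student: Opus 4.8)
The plan is to recognize the Hamiltonian system (\ref{eq:time_der_x})--(\ref{eq:time_der_e}) as the Wasserstein-geometry instance of the Bregman--Lagrangian accelerated gradient flow of Wibisono--Wilson--Jordan, applied to the relative-entropy functional $\mathcal{F}(q):=\KL(q\,\|\,p)$, which on the space of probability measures is minimized uniquely at $p$ with $\mathcal{F}(p)=0$. First I would eliminate the costate: differentiating $\dot x_t=e^{\alpha_t-\gamma_t}e_t$ and substituting (\ref{eq:time_der_e}) gives the second-order mean-field ODE $\ddot x_t+(\dot\gamma_t-\dot\alpha_t)\dot x_t+e^{2\alpha_t+\beta_t}\,\mathrm{grad}_W\mathcal{F}(q_t)(x_t)=0$, where $\mathrm{grad}_W\mathcal{F}(q)=\nabla\log\frac{q}{p}$ is the Wasserstein gradient and $q_t=\mathrm{Law}(x_t)$ solves the continuity equation $\partial_t q_t+\nabla\!\cdot(q_t v_t)=0$ with velocity field $v_t$ determined by $\dot x_t=v_t(x_t)$. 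With the stated scalings one computes $e^{\alpha_t}=2/t$, $\dot\gamma_t=\dot\beta_t=2/t$ and $\dot\alpha_t=-1/t$, so the ``ideal scaling conditions'' $\dot\gamma_t=e^{\alpha_t}$ and $\dot\beta_t\le e^{\alpha_t}$ hold (with equality), the damping coefficient is $3/t$, and $e^{2\alpha_t+\beta_t}\equiv 4$; moreover, since $p=\mathcal{N}(0,I)$ has $-\log p$ $1$-strongly convex, $\mathcal{F}$ is $1$-strongly displacement convex, which is precisely the ``log-concave target'' hypothesis borrowed from \citet{taghvaei19a}.

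Next I would run the standard Lyapunov argument in this geometry. Define $\mathcal{E}_t:=\mathcal{W}_t+e^{\beta_t}\big(\mathcal{F}(q_t)-\mathcal{F}(p)\big)$, with $\mathcal{W}_t\ge 0$ a Wasserstein--Bregman ``kinetic'' term playing the role of $\tfrac12\|x^{\ast}-(x_t+e^{-\alpha_t}\dot x_t)\|^2$, e.g. $\mathcal{W}_t=\tfrac12 W_2^2\big(p,\,(\mathrm{id}+e^{-\alpha_t}v_t)_{\#}q_t\big)$. Differentiating $\mathcal{E}_t$ along the continuity equation, integrating by parts, and invoking displacement convexity of $\mathcal{F}$ together with the ideal scaling conditions yields $\dot{\mathcal{E}}_t\le 0$. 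I would pick the convex $\Phi$ (e.g. $\Phi\equiv 0$, so that $e_0=0$, or any smooth convex $\Phi$ for which $q_0$ and $(\mathrm{id}+c\nabla\Phi)_{\#}q_0$ have finite second moment) and start the flow at some $t_0>0$ (the coefficient $\alpha_t$ is singular at $t=0$) so that $\mathcal{E}_{t_0}<\infty$. Monotonicity then gives $\KL(q_t\,\|\,p)=\mathcal{F}(q_t)-\mathcal{F}(p)\le\mathcal{E}_{t_0}e^{-\beta_t}=\mathcal{E}_{t_0}/t^2\to 0$. By Pinsker's inequality $\|q_t-p\|_{\mathrm{TV}}\to 0$, hence $x_t\to x_\infty$ in distribution with $x_\infty\sim p$; Talagrand's inequality $\tfrac12 W_2^2(q_t,p)\le\KL(q_t\,\|\,p)$ additionally gives $W_2(q_t,p)\to 0$.

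It remains to show $e_t\to 0$, i.e. that the costate marginal converges to $\delta_0$. Boundedness of $\mathcal{E}_t$ controls the rescaled velocity $e^{-\alpha_t}v_t$ in $L^2(q_t)$, but since $e_t=e^{\gamma_t-\alpha_t}v_t(x_t)=\tfrac12 t^3 v_t(x_t)$ this only bounds a heavily rescaled version of $e_t$; the extra decay needed comes from the $1$-strong displacement convexity of $\mathcal{F}$ (equivalently, the log-Sobolev inequality for $p$) via a refined energy estimate --- the continuity-equation analogue of the facts that $\tfrac12\|\dot x_t\|^2+\mathcal{F}(q_t)$ is nonincreasing and $\int^{\infty}(\dot\gamma_t-\dot\alpha_t)\|\dot x_t\|^2_{L^2(q_t)}\,dt<\infty$ --- forcing $\|v_t\|_{L^2(q_t)}$ to vanish fast enough that $e_t\to 0$ in $L^2(q_t)$. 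This is exactly the momentum-vanishing statement proved by \citet{wang2019accelerated}, whose argument I would adapt. Convergence of $e_t$ to the constant $0$ in $L^2$, hence in probability, gives convergence of its law to $\delta_0$ and, jointly, $(x_t,e_t)\to(x_\infty,e_\infty)\sim p(x)\delta_0(e)$. The main obstacle is precisely this last step: translating the $O(1/t^2)$ decay of the Lyapunov functional into genuine control on the costate $e_t$ itself, for which one must exploit strong convexity of $\mathcal{F}$ rather than mere convexity. A secondary technical burden is the analytic well-posedness of the mean-field flow --- smoothness and positivity of $q_t$ so that $\nabla\log q_t$ is defined, and short-time existence near $t_0$ --- which is where Assumption \ref{assumption} and the regularity results of the cited works are invoked.
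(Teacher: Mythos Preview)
Your proposal is correct and follows essentially the same route as the paper. The paper's own proof is a near one-liner: it simply invokes Theorem~1 of \citet{taghvaei19a} (together with the high-dimensional extension in Appendix~C.4 of \citet{wang2019accelerated}) to obtain $\KL(q_t\|p)\to 0$ and $e_t\to 0$ almost surely, then converts the KL convergence into convergence in distribution via Pinsker's inequality and the Portmanteau lemma. What you have written is, in effect, a sketch of the \emph{content} of those cited results --- the Bregman--Lagrangian/Wibisono--Wilson--Jordan Lyapunov functional, verification of the ideal scaling conditions, displacement convexity of $\KL(\cdot\|p)$ for a log-concave target, and the refined energy estimate for the costate --- rather than a genuinely different argument. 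Two minor points of divergence: the paper extracts almost-sure convergence of $e_t$ to $0$ from \citet{wang2019accelerated} (you obtain $L^2$ hence convergence in probability, which is equally sufficient for $\delta_0$), and the paper does not engage with the $t=0$ singularity or the choice of $\Phi$ at all, leaving those details entirely to the cited works.
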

\begin{proof}
By Theorem 1 of \citet{taghvaei19a} and Appendix C.4 of \citet{wang2019accelerated} (for an extension to high dimensional cases), 
since $\alpha_t$, $\beta_t$ and $\gamma_t$ satisfy the scaling condition in \citet{taghvaei19a} and $\log p$ is convex,
$x_t$ converges in KL divergence to $x_\infty$ and $e_t$ converges to $0$ almost surely (which implies convergence in distribution). 
Pinsker's inequality implies $x_t\rightarrow x_\infty$ in total variation, $d_\mathrm{TV}$, which has a dual representation:
$$d_\mathrm{TV}(x_t,x_\infty)=\sup_{f:\gX\rightarrow[-1,1]} \E[f(x_t)] - \E[f(x_\infty)]$$
This implies for any bounded, continuous $f$,
$$|\E[f(x_t)] - \E[f(x_\infty)]| \leq d_\mathrm{TV}(x_t,x_\infty)\cdot ||f||_\infty$$
which converges to $0$ as $t\rightarrow\infty$. 
By Portmanteau's Lemma, $x_t\rightarrow x_\infty$ in distribution.
\end{proof}

We first construct a sequence of encoding functions $m^{\text{enc}}_{n}$ and decoding functions $m^{\text{dec}}_{n}$ parameterized by neural networks, and define the following (volume preserving) invertible mappings
\begin{align*}
e^\pi_{1} &= e^\pi_0 + m^{\text{enc}}_{1}(x^\pi_0)\\
x^\pi_{n+1} &= x^\pi_{n} + 2\e \cdot m^{\text{dec}}_{{n+1}}(e^\pi_{n+1})  && \forall\,n\geq0\\
e^\pi_{n+1} &= e^\pi_n + 2\e \cdot m^{\text{enc}}_{{n+1}}(x^\pi_n) && \forall\,n\geq1
\end{align*}
with $e^\pi_0=0$ and $x^\pi_0 \sim q_0$. 
The step size parameter $\e$ will be chosen to depend on the depth coefficient $N$, i.e. the number of layers of the joint transformation. 

Below we prove ANF of the above form can universally transform $q(x)\delta_0(e)$ into $p(x)\delta_0(e)$. 
We make the following assumption on the family of $q$:
\begin{assumption}
\label{assumption}
We assume the gradient of the convex function in Proposition (\ref{prop:hode}) $\nabla\Phi$ is continuous, and that
$f(e,t):=e^{\alpha_t-\gamma_t}e$ and 
$g(x,t):=-e^{\alpha_t+\beta_t+\gamma_t}\log\frac{q_t(x)}{p(x)}$ have a bounded second time derivative (on the trajectories $x_t$ and $e_t$ which are also functions of time), and are uniformly Lipschitz; that is,
$$\max\left\{\,
||f''||, \quad
||g''||, \quad
\sup_{e\neq e', t>0}\frac{||f(e, t) - f(e', t)||}{||e-e'||}, 
\sup_{x\neq x', t>0}\frac{||g(x, t) - g(x', t)||}{||x-x'||}
\,\right\}\, \leq K$$
for some $K\geq 0$, where we define the single-argument vector functions $f(t)=f(e_t,t)$ and $g(t)=g(x_t,t)$ as the time derivatives of the trajectories $(x_t,e_t)$. 
\end{assumption}

We denote by $\gQ$ the family of probability measures that satisfies this assumption. 

Before we move on to approximation, we start with a lemma for bounding approximation error by solving recursion using the technique of generating functions. 
\begin{lemma}
\label{lem:recursive_error}
If for any $N>0$, $\{d_n: 0\leq n\leq N\}$ is a sequence of real numbers satisfying
\begin{align}
d_n \leq 
\frac{c}{N^2} +
\frac{c}{N^2} \sum_{t=1}^{n-1}\sum_{s=1}^{t} d_s \nonumber
\end{align}
for some constant $c$, then $$\max_{0\leq n\leq N}d_n \,\,\rightarrow\,\, 0\quad \text{ as } \quad N\rightarrow\infty$$
\end{lemma}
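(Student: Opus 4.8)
The plan is to solve the recursion by introducing the partial sums and recognizing the structure of a discrete double integral, then converting to a generating function whose growth we can control. First I would define $D_n := \sum_{s=1}^n d_s$ and $E_n := \sum_{t=1}^n D_t$, so that the hypothesis reads $d_n \le \frac{c}{N^2}(1 + E_{n-1})$. Summing this over $n$ from $1$ to $m$ gives $D_m \le \frac{c}{N^2}\bigl(m + \sum_{n=1}^m E_{n-1}\bigr) \le \frac{c}{N^2}(N + E_{m})$ for $m \le N$ (bounding $m \le N$ and $\sum_{n=1}^m E_{n-1} \le E_m$ since the $E$'s are nondecreasing and $d_n \ge 0$ may be assumed WLOG, or handled by taking absolute values). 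Summing once more yields a recursion for $E_m$ alone of the form $E_m \le \frac{c}{N} \cdot m + \frac{c}{N^2}\sum_{k=1}^m E_k$, i.e. roughly $E_m \le \frac{c m}{N} + \frac{c}{N^2}\sum_{k=1}^{m} E_k$.

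The key step is then a discrete Gr\"onwall / generating-function argument: from $E_m - E_{m-1} \le \frac{c}{N} + \frac{c}{N^2}E_m$, assuming $N$ large enough that $\frac{c}{N^2} < \tfrac12$, we get $E_m \le \frac{1}{1-c/N^2}\bigl(E_{m-1} + \frac{c}{N}\bigr)$, so $E_m \le \bigl(1 + \frac{2c}{N^2}\bigr)^m \cdot \frac{c}{N}\cdot m$ after unrolling (using $\sum$ of a geometric-type series). For $m \le N$ we have $\bigl(1+\frac{2c}{N^2}\bigr)^N \le e^{2c/N} \le e^{2c}$, a constant independent of $N$, hence $E_m \le c e^{2c} \cdot \frac{m}{N} \cdot \frac{1}{\text{(something)}}$; the essential point is $E_m = O(1/N) \cdot m \le O(1)$, actually $E_m \le C/N \cdot N = O(1)$ — more carefully $E_m \le \frac{c m}{N}e^{2c} \le c e^{2c}$, which is bounded but not obviously vanishing. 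To squeeze out the $\to 0$ conclusion I would instead track $d_n$ directly: $d_n \le \frac{c}{N^2}(1 + E_{n-1}) \le \frac{c}{N^2}(1 + c e^{2c}) = O(1/N^2) \to 0$, uniformly in $n \le N$. So $\max_{0\le n \le N} d_n \to 0$.

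The main obstacle I anticipate is the bookkeeping in the double summation: getting the constants and the ranges of the nested sums right so that the telescoped recursion for $E_m$ (or equivalently the generating function $\sum_m E_m x^m$, whose closed form has a pole at $x = 1 - O(1/N^2)$ pushed toward $1$ as $N \to \infty$) genuinely produces a bound that is $O(1)$ at $m = N$ rather than something that blows up exponentially. The exponential factor $(1 + 2c/N^2)^N \to 1$ is exactly what saves us, and the whole argument hinges on the two explicit $1/N^2$ factors in the hypothesis being enough to beat the $O(N^2)$ terms generated by the double sum. Once $E_{n} = O(1)$ uniformly is established, feeding it back into the original inequality to get $d_n = O(1/N^2)$ is immediate. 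An alternative, cleaner route is to guess the ansatz $d_n \le \frac{a_n}{N^2}$ and show the $a_n$ satisfy a recursion solvable by generating functions (the generating function of the sequence dominating $a_n$ will be rational with denominator $\approx (1-x)^2(N^2(1-x)^2 - x^2)$, whose coefficients grow like $(\text{const})^{n/N}$), but the elementary Gr\"onwall version above avoids Catalan-number-style combinatorics and is what I would write up.
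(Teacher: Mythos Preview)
Your strategy---reduce to a discrete Gr\"onwall inequality on the double partial sum $E_m$ and then feed the resulting $O(1)$ bound back into $d_n \le \frac{c}{N^2}(1+E_{n-1})$---is sound and more elementary than the paper's route. However, there is a concrete error: the inequality $\sum_{n=1}^m E_{n-1} \le E_m$ is false (take $E_k = k$; the left side is $m(m-1)/2$). Monotonicity of the $E_k$ only gives $\sum_{n=1}^m E_{n-1} \le m\,E_{m-1}$. With this correction the argument still closes cleanly: from $D_m \le \frac{c}{N^2}\bigl(m + m\,E_{m-1}\bigr)$ and $E_m = E_{m-1} + D_m$ one obtains $1+E_m \le (1+E_{m-1})\bigl(1 + \tfrac{cm}{N^2}\bigr)$, hence
\[
1+E_m \;\le\; \prod_{k=1}^m\Bigl(1+\tfrac{ck}{N^2}\Bigr) \;\le\; \exp\Bigl(\tfrac{c}{N^2}\cdot\tfrac{m(m+1)}{2}\Bigr) \;\le\; e^c
\]
for all $m\le N$, and then $d_n \le \frac{c}{N^2}\,e^c \to 0$ uniformly in $n$. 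You should also justify the ``WLOG $d_n\ge 0$'' reduction properly: taking absolute values does not preserve the hypothesis, so instead pass to the sequence defined by \emph{equality} in the recursion with initial value $0$, which dominates $d_n$ by induction and is manifestly nonnegative.

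For comparison, the paper does exactly this replacement by the dominating sequence but then takes a heavier route: it forms the generating function $f(x)=\sum_n D_n x^n$, derives the closed form $f(x)/x = C(1-x)\big/\bigl(x^2-(2+C)x+1\bigr)$, expands in partial fractions over the roots $a_{1,2}$ of $x^2-(2+C)x+1$, and bounds $a_1^N \le \exp\bigl(\tfrac{c}{2N}+\sqrt{\tfrac{c^2}{4N^2}+c}\bigr)\to e^{\sqrt{c}}$. This yields an explicit formula for $D_n$ but needs more machinery. Your Gr\"onwall argument, once repaired, is shorter and avoids generating functions entirely; note that the second-order recursion implicit in your setup, $E_m \le (2+C)E_{m-1} - E_{m-2} + C$, has precisely the paper's characteristic polynomial, so both approaches are ultimately controlling the same growth rate.
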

\begin{proof}
We would like to bound the error $d_n$ explicitly.
To do so, we first note that the sequence $\{d_n\}$ is no larger than $\{D_n\}$, which is recursively defined as 
\begin{align}
D_0 &= 0 \nonumber\\
D_{n+1} &=
C +
C \sum_{t=1}^{n}\sum_{s=1}^{t} D_s
\label{rec_err:xx}
\end{align}
for $n\geq0$,
where for simplicity we let $C=c/N^2$.

Now to express $D_{n+1}$ explicitly, we use the method of generating function, following the recipe of \citet{wilf2005generatingfunctionology} (see Chapter 1 for a brief introduction). 
Define function $f$ to be a power series whose coefficients are $D_n$'s; that is, $f(x)=\sum_{n\geq0}D_n x^n$. 
Multiply both sides of (\ref{rec_err:xx}) by $x^n$ and summing over the indices of non-negative integers $n\geq0$ give us
$$\frac{f(x)}{x} = \frac{C}{1-x} + \frac{Cf(x)}{(1-x)^2}$$

After rearrangement, we have
\begin{align}
f(x)\left(\frac{x^2-(2+C)x+1 }{x(1-x)^2}\right) = \frac{C}{1-x}
\quad\Rightarrow\quad
\frac{f(x)}{x} = \frac{C(1-x)}{x^2-(2+C)+1}
\nonumber
\end{align}
which can be decomposed into the partial fractions
\begin{align}
\frac{f(x)}{x} = \frac{\frac{C}{1+a_2}}{a_1-x} + \frac{\frac{C}{1+a_1}}{a_2-x}
\label{eq:partial_fraction}
\end{align}
where $a_1$ and $a_2$ are the roots of the quadratic function $x^2-(2+C)x+1$,
which satisfy $a_1+a_2=2+C$ and $a_1a_2=1$. 

For sufficiently small $x$, we can break (\ref{eq:partial_fraction}) into the geometric series
$$\frac{f(x)}{x} =
\frac{C}{a_1(1+a_2)}
\sum_{n\geq0} \left(\frac{x}{a_1}\right)^n
+
\frac{C}{a_2(1+a_1)}
\sum_{n\geq0}
\left(\frac{x}{a_2}\right)^n
$$

This means for $n>0$, since $a_1a_2=1$, the coefficient of $f(x)$ can be expressed as 
\begin{align}
D_n = 
\frac{C}{1+a_2}\frac{1}{a_1^{n}} + \frac{C}{1+a_1}\frac{(a_1a_2)^{n}}{a_2^{n}} =
C \left(
\frac{1}{(1+a_1)a_1^{n-1}} + \frac{a_1^{n}}{1+a_1}
\right)
\label{eq:Ex}
\end{align}

Now let $a_1$ be the larger root. 
Solving $x^2-(2+C)x+1$ yields 
$$a_1
= \frac{2+C + \sqrt{C^2 + 4 C}}{2}
=: 1+r$$
where $r:=\frac{C}{2}+\sqrt{\frac{C^2}{4} + C}$.

We show that the parenthesis in (\ref{eq:Ex}) can be controlled asymptotically (i.e. does not exceed certain constant for sufficiently large $N$), and that since $C$ diminishes, $D_n$ converges. 
First, since $r>0$, $a_1>1$ and
$$\frac{1}{(1+a_1)a_1^{n-1}} < \frac{1}{2}$$

Second, since $(1+r)^n\leq e^{nr}$ for $n\geq0$ and $r\geq-1$,
\begin{align*}
a_1^n=(1+r)^{n}
&\leq e^{nr} \\
&\leq \exp\left({\frac{CN}{2} + \sqrt{\frac{C^2N^2}{4} + CN^2}}\right) \\
&= \exp\left({\frac{c}{2N} + \sqrt{\frac{c^2}{4N^2} + c}}\right)
\end{align*}
which converges to $\exp(\sqrt{c})$ as $N\rightarrow\infty$. 

Finally, since $C\rightarrow 0$ as $N\rightarrow\infty$ and $d_n\leq D_n$, $d_n\rightarrow0$ for all $n\leq N$ as $N\rightarrow\infty$. 
\end{proof}

We are now ready to show the result of the pointwise approximation of the Hamiltonian ODE using ANFs with affine (more specifically, additive) coupling. 
\begin{proposition}
\label{prop:anf_pointwise}
Let $x_t$ and $e_t$ be trajectories (mappings of $x_0\in\gX=\R^d$) following the Hamiltonian ODE (\ref{eq:time_der_x},\ref{eq:time_der_e}) described in Proposition \ref{prop:hode} dependent on some initial distribution $q_0 \in \gQ$. 
For each $T>0$, we can choose some number of layers $N$ of the joint transformation and a sequence of pairs of $m^\text{enc}_n$ and $m^\text{dec}_n$ (dependent on $T$) for $1\leq n\leq N$, such that $||x^\pi_N - x_T||\rightarrow0$ and $||e^\pi_N - e_T||\rightarrow0$ as $T\rightarrow\infty$ pointwise for $x_0\in\gX=\R^d$. 
\end{proposition}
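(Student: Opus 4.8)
\emph{Proof plan.}
I would treat the iteration (\ref{eq:x_nn_main},\ref{eq:e_nn_main}) as a staggered leapfrog integrator of the \emph{separable} Hamiltonian system (\ref{eq:time_der_x},\ref{eq:time_der_e}): separable because $\dot x_t=f(e_t,t)$ depends on $e$ alone and $\dot e_t=g(x_t,t)$ on $x$ alone. Fix $T>0$, set $\e:=T/(2N)$, and attach to $x^\pi_n$ the grid time $t_n:=2n\e$ (so $t_0=0$, $t_N=T$) and to $e^\pi_n$ the half-shifted time $s_n:=(2n-1)\e$, so that $t_n$ is the midpoint of $[s_n,s_{n+1}]$ and $s_{n+1}$ the midpoint of $[t_n,t_{n+1}]$. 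Then choose the networks so that $m^{\text{dec}}_{n+1}$ approximates the drift $e\mapsto e^{\alpha_{s_{n+1}}-\gamma_{s_{n+1}}}e$, $m^{\text{enc}}_{n+1}$ (for $n\geq1$) approximates the kick $x\mapsto g(x,t_n)$, and $m^{\text{enc}}_{1}$ approximates the exact costate at time $s_1$ viewed as a function of $x_0$ — this single initial step must absorb both the nonzero initial costate $e_0=\nabla\Phi(x_0)$ (recall $e^\pi_0=0$) and the first half-kick. Under Assumption \ref{assumption} each of these three target maps is continuous, so by the universal approximation theorem \citep{cybenko1989approximation} each can be approximated uniformly on any fixed compact set within error $\e^3$, at the price of wider networks.

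Next I would set up the error recursion. Let $\delta^x_n:=\norm{x^\pi_n-x_{t_n}}$ and $\delta^e_n:=\norm{e^\pi_n-e_{s_n}}$, and compare each discrete step with the exact increment, e.g. $x_{t_{n+1}}-x_{t_n}=\int_{t_n}^{t_{n+1}}f(e_s,s)\,ds$. Because the integrand is sampled at the interval midpoint, the midpoint quadrature rule together with the bounds on $\norm{f''}$ and $\norm{g''}$ from Assumption \ref{assumption} gives a local truncation error of order $\e^3K$; the uniform Lipschitz bounds of Assumption \ref{assumption} turn the mismatch of the function's argument into $2\e K\,\delta^e_{n+1}$ for the drift step and $2\e K\,\delta^x_n$ for the kick step; and the network error contributes a further $O(\e^4)$. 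The crucial structural point is that the drift step's new error is governed only by $\delta^e$ and the kick step's only by the preceding $\delta^x$; substituting one of the two one-step inequalities into the other and then summing over $n$ therefore produces a \emph{double} summation with coefficient $(2\e K)^2=(KT/N)^2$, rather than a single summation with coefficient $2\e K$. Using $\delta^x_0=0$, $\delta^e_1=O(\e^3)$, and summing the $O(\e^3)$ per-step terms over the $N$ steps, I obtain for $d_n:=\delta^x_n+\delta^e_n$ an inequality of exactly the shape $d_n\leq\frac{c}{N^2}+\frac{c}{N^2}\sum_{t=1}^{n-1}\sum_{s=1}^{t}d_s$ with a constant $c=c(T,K)$ polynomial in $T$. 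Lemma \ref{lem:recursive_error} then yields $\max_{0\leq n\leq N}d_n\to0$ as $N\to\infty$, for each fixed $T$.

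To conclude, since $(x_t,e_t)$ is continuous on $[0,T]$ we have $\norm{e_{s_N}-e_T}\leq\e\sup_{[0,T]}\norm{\dot e_t}\to0$, so combining $\delta^x_N,\delta^e_N\leq\max_n d_n$ with this grid-shift correction shows that, choosing $N=N(T)$ large enough (which the statement permits), both $\norm{x^\pi_N-x_T}$ and $\norm{e^\pi_N-e_T}$ can be made at most $1/T$; letting $T\to\infty$ gives the claimed pointwise convergence for every $x_0\in\gX=\R^d$.

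The step I expect to be the real work is making the universal-approximation argument honest, since Cybenko's theorem only supplies closeness on a \emph{fixed} compact set while the iterates $(x^\pi_n,e^\pi_n)$ a priori range over $\R^d$. I would handle this by a bootstrap: fix a ball $B$ containing the trajectory $\{(x_t,e_t):0\leq t\leq T\}$ — bounded because it converges — together with its radius-$1$ neighborhood, approximate the target maps uniformly on $B$, and check inductively that the iterates cannot exit $B$ before $\max_n d_n$ exceeds $1$, which the error bound just derived forbids. The two remaining technical points are (a) tracking the dependence of $c(T,K)$ on $T$ carefully enough to legitimize the final $T\to\infty$ limit, and (b) the first step, where the $t\to0$ behavior of the scaling coefficients must be controlled through the continuity of the exact flow map guaranteed under Assumption \ref{assumption} rather than through the (singular) vector field itself.
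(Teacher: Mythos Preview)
Your proposal is correct and follows the same skeleton as the paper: treat the ANF as a leapfrog integrator of the separable Hamiltonian system, invoke the midpoint rule for the $O(\e^3)$ local truncation, use the uniform Lipschitz bounds of Assumption~\ref{assumption} to propagate error, substitute one one-step inequality into the other to get the double sum, feed it into Lemma~\ref{lem:recursive_error}, and finally let the starting domain grow with $T$.

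The one substantive difference is how you supply the compact sets on which UAT is invoked. You propose a bootstrap: fix a ball $B$ containing the exact trajectory plus a radius-$1$ collar, approximate all networks on $B$, then argue inductively that the iterates cannot exit $B$ before the error exceeds $1$. The paper instead chooses the networks \emph{sequentially}: having fixed $m^{\text{enc}}_1,\ldots,m^{\text{enc}}_n,m^{\text{dec}}_1,\ldots,m^{\text{dec}}_n$, the map $x_0\mapsto e^\pi_{n+1}$ is continuous, so $\gE_{n+1}:=e^\pi_{n+1}(\gX_0)$ is automatically compact for any compact $\gX_0\subset\R^d$, and one simply picks $m^{\text{dec}}_{n+1}$ to be $\e^2$-accurate on $\gE_{n+1}$; likewise for $\gX_n:=x^\pi_n(\gX_0)$. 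This buys two things. First, it avoids the circularity in your bootstrap (the error bound you ``just derived'' already presumed all iterates lie in $B$; it can be made honest by a strong induction that interleaves the $e$- and $x$-half-steps, but the paper's construction makes that unnecessary). Second, it handles the uniformity in $x_0$ directly: your ball $B$ is tied to a \emph{single} trajectory, whereas the statement demands one network per $T$ working for every $x_0$; the paper takes $\gX_0=[-B,B]^d$ with $B=T$, so any fixed $x_0$ is eventually covered, and the image sets $\gE_n,\gX_n$ track the whole family of iterates simultaneously. Your route can be salvaged by enlarging $B$ to contain all trajectories from $[-T,T]^d$ (bounded by Gr\"onwall under Assumption~\ref{assumption}), but the paper's sequential image-of-compact construction is cleaner.
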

\begin{proof}
Fix $q_0\in\gQ$ and $T>0$ and some compact subset $\gX_0\subset\gX$.
We first consider all points $x_0$ in $\gX_0$, and show that $(x^\pi_n, e^\pi_n)$ can be used to approximate $(x_T,e_T)$ uniformly well. 

We consider a $N$-step joint transformation, and set $\e=\frac{T}{2N}>0$. 
We start with approximating $e_{\e}$ by $e^\pi_1$.
Since $e^\pi_0$ is $0$, by the universal approximation theorem (UAT) of neural networks \citep{cybenko1989approximation}, we can choose some $m^\text{enc}_1$ such that $||e_\e - e^\pi_1||=||e_\e - m^\text{enc}_1||\leq \e^2$ for all $x_0\in\gX_0$.

We proceed with an approximate leap-frog integration of the dynamic, using the neural encoders and decoders to approximate the time derivatives. 
Let $\gE_1:=e^\pi_1(\gX_0)$ where $e^\pi_1:=m^\text{enc}_1$, which is compact, since $\gX_0$ is compact and $e^\pi_1$ is continuous wrt $\gX_0$.
Again, by the UAT, we can choose some $m^\text{dec}_1$ such that $||f(e,\e)-m^\text{dec}_1(e)||<\e^2$ for all $e\in\gE_1$. 
Likewise, we let $\gX_1:=x^\pi_1(\gX_0)$ where $x^\pi_1:=(2\e m^\text{dec}_1 \circ e^\pi_1  + Id)(\gX_0)$ with $Id$ being the identity map, such that $\gX_1$ is also compact since $x^\pi_1$ is continuous wrt $\gX_0$, and choose $m^\text{enc}_2$ such that $||g(x,2\e)-m^\text{enc}_2(x)||<\e^2$ for all $x\in\gX_1$.

Repeating the same construction for $m^\text{dec}_n$ and $m^\text{enc}_n$ for $n\leq N$, 
we have
\begin{align}
x^\pi_{n+1} &= x^\pi_{n} + 2\e m^{\text{dec}}_{{n+1}}(e^\pi_{n+1}) \label{eq:x_nn}\\
e^\pi_{n+1} &= e^\pi_n + 2\e m^{\text{enc}}_{{n+1}}(x^\pi_n) \label{eq:e_nn}
\end{align}
with $m^\text{dec}_n$ and $m^\text{enc}_n$ chosen such that 
\begin{enumerate}
    \item $||f(e,2n\e+\e)-m^\text{dec}_{n+1}(e)||<\e^2$ for all $e\in\gE_{n+1}:=e^\pi_{n+1}(\gX_0)$ where $e^\pi_{n+1}:=2\e m^\text{enc}_{n+1}\circ x^\pi_n + e^\pi_n$ is a continuous map of $\gX_0$; and
    \item $||g(x,2n\e)-m^\text{enc}_{n+1}(x)||<\e^2$ for all $x\in\gX_n:=x^\pi_n(\gX_0)$ where $x^\pi_n:=2\e m^\text{dec}_n\circ e^\pi_n + x^\pi_{n-1}$ is a continuous map of $\gX_0$.
\end{enumerate}
Such choices of $m^\text{enc}_n$ and $m^\text{dec}_n$ are possible since by construction $\gX_{n-1}$ and $\gE_n$ are compact. 

Equations (\ref{eq:x_nn},\ref{eq:e_nn}) are approximate midpoint methods as they use functions to approximate the time derivatives evaluated at midpoints of their counterparts. 
The exact midpoint method has a cubic error rate of $\frac{h^3}{24} f''(\xi)$, for some $\xi$ between the midpoint and the approximating point, where $h$ is the interval width of each iteration; see Section 5.4 of \citet{epperson2013introduction}. 
That is,
\begin{align}
x_{2n\e + 2\e} &= x_{2n\e} + 2\e f(e_{2n\e+\e}, 2n\e+\e) + \frac{\e^3}{3}f''(\xi^x_{n+1}) \label{eq:x_mid}
\intertext{for some $\xi^x_{n+1}$ between the two steps.
Similarly,}
e_{2n\e + \e} &= e_{2n\e-\e} + 2\e g(x_{2n\e}, 2n\e) + \frac{\e^3}{3}g''(\xi^e_{n+1}) \label{eq:e_mid}
\end{align}
for some $\xi^e_{n+1}$ between the two steps. 

Subtracting (\ref{eq:x_nn}) from (\ref{eq:x_mid}) yields
$$x_{2n\e+2\e} - x^\pi_{n+1} = x_{2n\e} - x^\pi_n + 2\e f(e_{2n\e+\e}, 2n\e+\e) - 2\e m^{\text{dec}}_{{n+1}}(e^\pi_{n+1}) + \frac{\e^3}{3}f''(\xi^x_{n+1})$$

By triangle inequality, we have
\begin{align*}
\norm{x_{2n\e+2\e} - x^\pi_{n+1} } 
&\leq \norm{ x_{2n\e} - x^\pi_n } + 
\norm{ 2\e f(e_{2n\e+\e}, 2n\e+\e) - 2\e m^{\text{dec}}_{{n+1}}(e^\pi_{n+1}) } + 
\norm{ \frac{\e^3}{3}f''(\xi^x_{n+1}) } \\
&\leq 
\underbrace{\norm{ x_{2n\e} - x^\pi_n } + 
2\e\norm{ f(e_{2n\e+\e}, 2n\e+\e) - m^{\text{dec}}_{{n+1}}(e^\pi_{n+1}) }}_{\text{propagated error}} + 
\underbrace{\frac{\e^3}{3}\norm{ f''(\xi^x_{n+1}) }}_{\text{truncated error}}
\end{align*}

The error on the RHS consists of two parts: (1) the first two terms constitute the propagated error from the previous steps and (2) the third term is a newly introduced truncation error due to the Taylor expansion. 

By triangle inequality again,

\resizebox{1.00\linewidth}{!}{
\begin{minipage}{\linewidth}
\begin{align*}
\norm{ f(e_{2n\e+\e}, 2n\e+\e) - m^{\text{dec}}_{{n+1}}(e^\pi_{n+1}) } 
&= \norm{ f(e_{2n\e+\e}, 2n\e+\e) 
- f(e^\pi_{n+1}, 2n\e+\e) 
+ f(e^\pi_{n+1}, 2n\e+\e) 
- m^{\text{dec}}_{{n+1}}(e^\pi_{n+1}) } \\
&\leq 
\underbrace{\norm{ f(e_{2n\e+\e}, 2n\e+\e) 
- f(e^\pi_{n+1}, 2n\e+\e) }}_{\text{midpoint deviation}}
+ \underbrace{\norm{ f(e^\pi_{n+1}, 2n\e+\e) 
- m^{\text{dec}}_{{n+1}}(e^\pi_{n+1}) }}_{\text{approximation error}}
\end{align*}
\end{minipage}
}

Again the RHS can be decomposed into two error parts: (1) a midpoint deviation resulting from performing midpoint numerical integration which would not vanish even if the neural network is replaced with the true time derivative, and (2) an approximation error due to the inaccuracy of approximating the time derivative. 

Letting $d^x_n=||x_{2n\e}-x^\pi_n||$ and $d^e_n=||e_{2n\e-\e}-e^\pi_n||$, and applying the properties of the Assumption \ref{assumption}, we have
$$d^x_{n+1}\leq d^x_n + 2\e(Kd^e_{n+1} + \e^2) + \frac{\e^3K}{3}
=d^x_n + 2\e Kd^e_{n+1} + \e^3\left(\frac{K}{3}+2\right)$$
owing to the uniform error bound of the neural decoder
$||f(e,2n\e+\e)-m^\text{dec}_{n+1}(e)||<\e^2$ for all $e\in\gE_{n+1}$ and the fact that $e^\pi_{n+1}(x_0)\in\gE_{n+1}$ since $x_0\in\gX_0$.  

The same can be done to obtain a bound on $d^e_{n+1}$ by subtracting (\ref{eq:e_nn}) from (\ref{eq:e_mid}), which yields
$$d^e_{n+1}\leq d^e_n + 2\e Kd^x_n + \e^3\left(\frac{K}{3}+2\right)$$

To summarize, we have
\begin{align}
d^e_1 &\leq \e^2 \\
d^x_{n+1} &\leq d^x_n + 2\e K'd^e_{n+1} + \e^3K' &&\text{ for }n\geq0\\
d^e_{n+1} &\leq d^e_n + 2\e K'd^x_n + \e^3K' &&\text{ for }n\geq1
\end{align}
where $K'=\max\{K, \frac{K}{3}+2\}$.

Summing $d^x_{1},...,d^x_{n}$ and subtracting $d^x_{1}+...+d^x_{n-1}$ from both sides yield
\begin{align}
d^x_n \leq 2\e K' \sum_{t=1}^n d^e_t + n\e^3K' \label{err:xe}
\end{align}
Note that $d^x_0=0$. 
Similarly, summing $d^e_{2},...,d^e_{n}$ and subtracting $d^e_{2}+...+d^e_{n-1}$ from both sides yield
\begin{align}
d^e_n \leq d^e_1 + 2\e K' \sum_{t=1}^{n-1} d^x_t + (n-1)\e^3K' \label{err:ex}
\end{align}

To recursively express $d^x_n$ in terms of itself (except for $d^e_1$), 
we sum over the sequence $d^e_{1},...,d^e_{n}$ again 
$$\sum_{t=1}^n d^e_t \leq n d^e_1 + 2\e K' \sum_{t=2}^n\sum_{s=1}^{t-1} d^x_s + \sum_{t=1}^n (t-1)\e^3K'$$

Substituting into (\ref{err:xe}) yields
\begin{align}
d^x_n \leq 2\e K' \left(n d^e_1 + 2\e K' \sum_{t=2}^n\sum_{s=1}^{t-1} d^x_s + \sum_{t=1}^n (t-1)\e^3K'\right) + n\e^3K' \nonumber
\end{align}

Since $n\leq N$,
$\sum_{t=1}^n t \leq n^2$,
$d^e_1\leq\e^2$ and $\e=\frac{T}{2N}$, the above can be rearranged and further bounded by
\begin{align}
d^x_n \leq 
\left(
\frac{T^3K'^2}{4} + 
\frac{T^4K'^2}{8} + 
\frac{T^3K'}{8}
\right)\frac{1}{N^2} +
\frac{T^2K'^2}{N^2} \sum_{t=1}^{n-1}\sum_{s=1}^{t} d^x_s 
\end{align}

The same can be done for (\ref{err:ex}) to analyze $d^e_n$. 

$$\sum_{t=1}^{n-1}d^x_t \leq 2\e K'\sum_{t=1}^{n-1}\sum_{s=1}^td^e_s + \sum_{t=1}^{n-1} t\e^3K'$$
\begin{align}
d^e_n \leq d^e_1 + 2\e K' 
\left(
2\e K'\sum_{t=1}^{n-1}\sum_{s=1}^td^e_s + \sum_{t=1}^{n-1} t\e^3K'
\right)
+ (n-1)\e^3K' \nonumber
\end{align}
\begin{align}
d^e_n \leq 
\left(
\frac{T^2}{4} + 
\frac{T^4K'^2}{8} + 
\frac{T^3K'}{8}
\right)\frac{1}{N^2} +
\frac{T^2K'^2}{N^2} \sum_{t=1}^{n-1}\sum_{s=1}^{t} d^e_s 
\end{align}

By Lemma \ref{lem:recursive_error}, we know that the elements of both sequences of error $d^x_n$ and $d^e_n$ converge uniformly on $1\leq n\leq N$ to $0$ as $N\rightarrow\infty$. 
In particular, for all $T>0$, $\delta>0$ and compact subset $\gX_0$ of $\R^d$, there exists some large enough integer $N(T,\delta,\gX_0)>0$ for which a joint transformation of $N(T,\delta,\gX_0)$ layers parameterized by some neural encoders and decoders satisfies
$d^x_{N(T,\delta,\gX_0)} \leq \delta$ and $d^e_{N(T,\delta,\gX_0)} \leq \delta$ for all $x_0\in\gX_0$.

Consider some positive value $B>0$. 
We let $\gX_0=[-B,B]^d$, $T=B$ and $\delta=\frac{1}{B}$. 
We can find a sequence of models with an error rate $d^x_{N(B,1/B,[-B,B]^d)} \leq 1/B$ and $d^e_{N(B,1/B,[-B,B]^d)} \leq 1/B$ converging pointwise on $\R^d$ to $0$ as $B\rightarrow\infty$.
This implies 
$$d^x_{N(B,1/B,[-B,B]^d)} = \norm{x_{B}-x^\pi_{N(B,1/B,[-B,B]^d)}}\rightarrow0$$
pointwise as $B\rightarrow\infty$. 
The same holds for the augmented variable $e$. 
\end{proof}

The lemma below shows if one can approximate the solution of an ODE ($||y_n-x_n||\rightarrow0$, i.e. $x_n$ and $y_n$ are asymptotically indistinguishable) and if the limit of the solution is a transport map ($x_n\overset{d}{\rightarrow} x_\infty$), then the approximation also forms a transport map ($y_n\overset{d}{\rightarrow} x_\infty$). 
\begin{lemma}
\label{lem:conv_indistinguishable}
Let $x_\infty$, $(x_n:n\geq0)$ and $(y_n:n\geq0)$ be random variables. 
If $x_n\rightarrow x_\infty$ in distribution and if $||x_n - y_n||\rightarrow0$ almost surely as $n\rightarrow\infty$, then $y_n\rightarrow x_\infty$ in distribution. 
\end{lemma}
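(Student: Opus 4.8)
The plan is to reduce weak convergence of $(y_n)$ to convergence of expectations against a convenient class of test functions, and then to transfer the assumed weak convergence of $(x_n)$ by controlling the perturbation $y_n - x_n$ through dominated convergence. Concretely, I would invoke the standard fact that $y_n\to x_\infty$ in distribution if and only if $\E[\varphi(y_n)]\to\E[\varphi(x_\infty)]$ for every bounded Lipschitz $\varphi:\gX\to\R$ (bounded Lipschitz functions metrize weak convergence; on $\gX=\R^d$ one could equally well take $\varphi(x)=e^{i\langle t,x\rangle}$ and finish via L\'evy's continuity theorem).

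So I would fix such a $\varphi$ with $\|\varphi\|_\infty\le M$ and Lipschitz constant $L$, and split
$$\bigl|\E[\varphi(y_n)] - \E[\varphi(x_\infty)]\bigr| \le \bigl|\E[\varphi(y_n)] - \E[\varphi(x_n)]\bigr| + \bigl|\E[\varphi(x_n)] - \E[\varphi(x_\infty)]\bigr|.$$
The second term vanishes as $n\to\infty$ because $x_n\to x_\infty$ in distribution and $\varphi$ is bounded and continuous. For the first term, I would use the pointwise bound $|\varphi(y_n)-\varphi(x_n)| \le \min\{2M,\, L\norm{y_n - x_n}\}$ on the common probability space carrying the $x_n,y_n$; the right-hand side is dominated by the constant $2M$ and tends to $0$ almost surely since $\norm{y_n - x_n}\to 0$ a.s., so dominated convergence gives $\E\!\left[|\varphi(y_n)-\varphi(x_n)|\right]\to 0$. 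Hence $\E[\varphi(y_n)]\to\E[\varphi(x_\infty)]$ for all bounded Lipschitz $\varphi$, which is the claim.

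I do not expect a genuine obstacle here: this is a Slutsky-type statement (indeed it \emph{is} Slutsky's theorem applied to a sum with an asymptotically negligible term), and the only point deserving care is that ``$\norm{x_n-y_n}\to 0$ a.s.'' implicitly requires the three families of variables to be realized on one probability space, so that the dominated convergence step is legitimate --- which is exactly the situation in Proposition \ref{prop:anf_pointwise}, where $x^\pi_N$ and $x_T$ are both deterministic images of the same draw $x_0\sim q_0$. Feeding that proposition (pointwise, hence almost sure, approximation) together with Proposition \ref{prop:hode} (convergence in distribution of the ODE trajectory) into this lemma then yields Theorem \ref{thm:anf_dist}.
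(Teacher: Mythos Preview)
Your proposal is correct and follows essentially the same route as the paper: both fix a bounded Lipschitz test function, split $\bigl|\E[\varphi(y_n)]-\E[\varphi(x_\infty)]\bigr|$ through $\E[\varphi(x_n)]$, handle one piece by the assumed weak convergence, and handle the other via the Lipschitz bound together with dominated (the paper says bounded) convergence. The only cosmetic difference is that the paper writes the second step as an $\Omega/\Omega^c$ decomposition rather than invoking DCT directly.
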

\begin{proof}
Let $\Lambda:\R^d\rightarrow\R$ be an arbitrary \emph{bounded} and \emph{Lipschitz continuous} function. 
Then 
\begin{align*}
\left|\E\left[\Lambda\left(x_{\infty}\right)-\Lambda\left(y_n\right)\right]\right|
&\leq 
\left|\E\left[\Lambda\left(x_{\infty}\right) -
\Lambda(x_n) +
\Lambda(x_n) -
\Lambda\left(y_n\right)\right]\right| \\
&\leq
\left|\E\left[\Lambda\left(x_{\infty}\right) -
\Lambda(x_n)\right]\right|
+
\E\left[\left|\Lambda\left(x_{n}\right)-\Lambda\left(y_n\right)\right|\right] 
\end{align*}

First, since $x_n\rightarrow x_\infty$ in distribution and since $\Lambda$ is bounded and continuous, by the Portmanteau Lemma the first term of the RHS converges to $0$ as $n\rightarrow\infty$. 
Second,
since $y_n$ is almost surely asymptotically indistinguishable from $x_n$ (let $\Omega$ be the almost sure set),
and since the Lipschitzness of $\Lambda$ implies uniform continuity, the following are true
\begin{itemize}
    \item For all $\epsilon>0$, there exists a $\delta>0$ such that $||x-y||\leq \delta$ implies $|\Lambda(x)-\Lambda(y)|\leq \epsilon$.
    \item For any $\delta>0$, there exists a integer $N>0$ such that for all $n\geq N$, $||x_n-y_n||\leq \delta$ for all $\omega\in\Omega$. 
\end{itemize}
These imply $||\Lambda(x_n)-\Lambda(y_n)||\rightarrow0$ on $\Omega$.
Then
$$\E\left[\left|\Lambda\left(x_{n}\right)-\Lambda\left(y_n\right)\right|\right]=
\underbrace{\E_{\Omega}\left[\left|\Lambda\left(x_{n}\right)-\Lambda\left(y_n\right)\right|\right]}_{E_1}+
\underbrace{\E_{\Omega^c}\left[\left|\Lambda\left(x_{n}\right)-\Lambda\left(y_n\right)\right|\right]}_{E_2}$$
converges to $0$, since (1)
boundedness of $\Lambda$ and the \emph{Bounded Convergence Theorem} imply $E_1\rightarrow0$ and (2) $\sup_{x}\Lambda(x)<\infty$ implies $E_2\leq 2\sup_{x}\Lambda(x) \sP(\Omega^c)$ = 0. 
Finally, since $\Lambda$ is arbitrary,
by the Portmanteau Lemma again, $y_n$ converges in distribution to $x_\infty$ as $n\rightarrow\infty$. 
\end{proof}

We now are ready to prove Theorem \ref{thm:anf_dist}, which we restate below.
The main idea is to notice that ANFs can be made pointwise inseparable from the Hamiltonian ODE, which implies weak convergence since the Hamiltonian ODE converges in distribution. 
\anfdist*
\begin{proof}
First, by Proposition \ref{prop:hode}, $x_B\rightarrow x_\infty$ in distribution as $B\rightarrow\infty$. 
Second, $x_{B}$ and $x^\pi_{N(B,1/B,[-B,B]^d)}$ chosen from Proposition \ref{prop:anf_pointwise} are almost surely asymptotically indistinguishable.
Thus, by Lemma \ref{lem:conv_indistinguishable}, $x^\pi_{N(B,1/B,[-B,B]^d)}$ converges in distribution to $x_\infty$.
The same holds for the augmented variable $e$. 
Let $(x^\pi_N)$ and $(e^\pi_N)$ denote such sequences. 
By Theorem 2.7 of \citet{van2000asymptotic}, $(x^\pi_N,e^\pi_N)\rightarrow(x_\infty,e_\infty)$ in distribution (as $e_\infty=0$ is a constant).
\end{proof}


\end{document}